\newif\ifpreprint
\newtheorem{theorem}{Theorem}
\newtheorem{lemma}[theorem]{Lemma}
\newcommand{\lr}[1]{\left (#1\right)}
\newcommand{\lrs}[1]{\left [#1 \right]}
\newcommand{\lrc}[1]{\left \{#1 \right\}}
\newcommand{\V}{\mathbb{V}}
\newcommand{\D}{\mathbb{D}}
\let\L\undefined
\newcommand{\L}{L}
\DeclareMathOperator{\testset}{S_{test}}
\NewDocumentCommand{\1}{o}{\mathds 1{\IfValueT{#1}{\lr{#1}}}}
\NewDocumentCommand{\E}{o}{\mathbb E\IfValueT{#1}{\lrs{#1}}}
\NewDocumentCommand{\Var}{o}{\V\IfValueT{#1}{\lrs{#1}}}
\let\P\undefined
\NewDocumentCommand{\P}{o}{\mathbb P{\IfValueT{#1}{\lr{#1}}}}
\DeclareMathOperator{\MV}{MV}
\DeclareMathOperator{\KL}{KL}
\DeclareMathOperator{\kl}{kl}
\newcommand{\dataset}[1]{\textsc{#1}}
\newcommand{\numhyp}{M}
\newcommand{\hatLtnd}{\hat L_{\texttt{tnd}}}
\newcommand{\nmin}{n_{\texttt{min}}}
\newcommand{\FO}{\operatorname{FO}}
\newcommand{\TND}{\operatorname{TND}}
\newcommand{\DIS}{\operatorname{DIS}}
\newcommand{\Cone}{\operatorname{C1}}
\newcommand{\Ctwo}{\operatorname{C2}}
\newcommand{\CTD}{\operatorname{CTD}}
\newcommand{\myparagraph}[1]{\textbf{#1}\quad}
\title{Second Order PAC-Bayesian Bounds\\for the Weighted Majority Vote}
\author{
  Andr{\'e}s R. Masegosa\thanks{Part of the work was done while AM was visiting the University of Copenhagen.}
  \\
  University of Almer{\' i}a\\
  \texttt{andresma@ual.es}
  \And
  Stephan S. Lorenzen ~~~~~~
  Christian Igel ~~~~~~
  Yevgeny Seldin\\
  University of Copenhagen\\
  \texttt{\{lorenzen,igel,seldin\}@di.ku.dk}
%
}
\begin{document}

\maketitle

\begin{abstract}
We present a novel analysis of the expected risk of weighted majority vote in multiclass classification. The analysis takes correlation of predictions by ensemble members into account and provides a bound that is amenable to efficient minimization, which yields improved weighting for the majority vote. 
We also provide a specialized version of our bound for binary classification, which allows to exploit additional unlabeled data for tighter risk estimation. In experiments, we apply the bound to improve weighting of trees in random forests and show that, in contrast to the commonly used first order bound, minimization of the new bound typically does not lead to degradation of the test error of the ensemble.
\end{abstract}

\section{Introduction}
Weighted majority vote is a fundamental technique for combining predictions of multiple classifiers. 
In machine learning, it was proposed for neural networks by  \cite{hansen:90}
and became popular 
with the works of \citet{Bre96,Bre01} on bagging and random forests and the work of \citet{FS96} on boosting. \citet{Zhu15} surveys the subsequent development of the field. Weighted majority vote is now part of the winning strategies in many machine learning competitions \citep[e.g.,][]{chen2016xgboost,hoch2015ensemble,puurula2014kaggle,stallkamp:12}. Its power lies in the cancellation of errors effect \citep{eckhardt:85}: when individual classifiers perform better than a random guess and make independent errors, the errors average out and the majority vote tends to outperform the individual classifiers.

A central question in the design of a weighted majority vote is the assignment of weights to individual classifiers. This question was resolved by \citet{BK16}  under the assumptions that the expected error rates of the classifiers are known and their errors are independent. However, neither of the two assumptions is typically satisfied in practice.

When the expected error rates are estimated based on a sample, the common way of bounding the expected error of a weighted majority vote is by twice the error of the corresponding randomized classifier \citep{LST02}. A randomized classifier, a.k.a.\ Gibbs classifier, associated with a distribution (weights) $\rho$ over classifiers draws a single classifier at random at each prediction round according to $\rho$ and applies it to make the prediction. The error rate of the randomized classifier is bounded using PAC-Bayesian analysis \citep{McA98,See02,LST02}. We call this a \emph{first order bound}. The factor 2 bound on the gap between the error of the weighted majority vote and the corresponding randomized classifier follows from the observation that an error by the weighted majority vote implies an error by at least a weighted half of the base classifiers. The bound is derived using Markov's inequality. While the PAC-Bayesian bounds for the randomized classifier are remarkably tight \citep{GLLM09,TIWS17}, the factor 2 gap is only tight in the worst-case, but loose in most real-life situations, where the weighted majority vote typically performs better than the randomized classifier rather than twice worse. The reason for looseness is that the approach does not take the correlation of errors into account.

In order to address the weakness of the first order bound, \citet{LLM+07} have proposed PAC-Bayesian C-bounds, which are based on Chebyshev-Cantelli inequality (a.k.a.\ one-sided Chebyshev's inequality) and take correlations into account. The idea was further developed by \citet{LMR11}, \citet{GLL+15}, and \citet{LMRR17}. However, the C-bounds have two severe limitations: (1) They are defined in terms of classification margin and the second moment of the margin is in the denominator of the bound. The second moment is difficult to estimate from data and significantly weakens the tightness of the bounds \citep{LIS19}. (2) The C-bounds are difficult to optimize. \citet{GLL+15} were only able to minimize the bounds in a highly restrictive case of self-complemented sets of voters and aligned priors and posteriors. In binary classification a set of voters is self-complemented if for any hypothesis $h\in\cal{H}$ the mirror hypothesis $-h$, which always predicts the opposite label to the one predicted by $h$, is also in $\cal{H}$. A posterior $\rho$ is aligned on a prior $\pi$ if $\rho(h) + \rho(-h) = \pi(h) + \pi(-h)$ for all $h\in\cal{H}$. Obviously, not every hypothesis space is self-complemented and such sets can only be defined in binary, but not in multiclass classification. Furthermore, the alignment requirement only allows to shift the posterior mass within the mirror pairs $(h,-h)$, but not across pairs. If both $h$ and $-h$ are poor classifiers and their joint prior mass is high, there is no way to remedy this in the posterior.

\citet{LIS19} have shown that for standard random forests applied to several UCI datasets the first order bound is typically tighter than the various forms of C-bounds proposed by \citet{GLL+15}. However, the first order approach has its own limitations. While it is possible to minimize the bound 
\citep{TIWS17}, it 
ignores the correlation of errors and minimization of the bound concentrates the weight on 
a few top classifiers and reduces the power of the ensemble. Our experiments show that minimization of 
the first order bound typically leads to deterioration of the test error.

We propose a novel analysis of the risk of weighted majority vote in multiclass classification, which addresses the weaknesses of previous methods. The new analysis is based on a second order Markov's inequality, $\P[Z \geq \varepsilon] \leq \E[Z^2]/\varepsilon^2$, which can be seen as a relaxation of the Chebyshev-Cantelli inequality. We use the inequality to bound the expected loss of weighted majority vote by four times the expected \emph{tandem loss} of the corresponding randomized classifier: The tandem loss measures the probability that two hypotheses drawn independently by the randomized classifier simultaneously err on a sample. Hence, it takes correlation of errors into account. We then use PAC-Bayesian analysis to bound the expected tandem loss in terms of its empirical counterpart and provide a procedure for minimizing the bound and optimizing the weighting. We show that the bound is reasonably tight and that, in contrast to the first order bound, minimization of the bound typically does not deteriorate the performance of the majority vote on new data. 

We also present a specialized version of the bound  for binary classification, which takes advantage of unlabeled data. It expresses the expected tandem loss in terms of a difference between the expected loss and half the expected disagreement between pairs of hypotheses. In the binary case the disagreements do not depend on the labels and can be estimated from unlabeled data, whereas the loss of a randomized classifier is a first order quantity, which is easier to estimate than the tandem loss. We note, however, that the specialized version only gives advantage over the general one when the amount of unlabeled data is considerably larger than the amount of labeled data.

\section{General problem setup}
\label{sec:generalsetup}

\myparagraph{Multiclass classification}
Let $S=\{(X_1,Y_1),\ldots,(X_n,Y_n)\}$ be an independent identically
distributed sample from $\cal{X}\times\cal{Y}$, drawn according to an unknown distribution $D$, where $\cal{Y}$ is finite and $\cal{X}$ is arbitrary. A hypothesis
is a function $h : \cal{X} \rightarrow \cal{Y}$, and $\cal H$ denotes a space of hypotheses. We evaluate the quality of a hypothesis
$h$  by the 0-1 loss $\ell(h(X),Y)=\1[h(X)\neq Y]$, where $\1[\cdot]$ is the indicator function. The expected loss of $h$ is denoted by $L(h) =
\mathbb{E}_{(X,Y)\sim D} [\ell(h(X),Y)]$ and the empirical loss of $h$ on a sample $S$ of size $n$ is denoted by $\hat{L}(h,S) =
\frac{1}{n} \sum_{i=1}^n \ell (h(X_i),Y_i)$. 

\myparagraph{Randomized classifiers}
A \emph{randomized classifier} (a.k.a.\ Gibbs classifier) associated with a distribution $\rho$ on $\cal{H}$, for each input $X$ randomly draws a hypothesis $h\in{\cal H}$ according to $\rho$ and predicts $h(X)$. The expected loss of a randomized classifier is given by $\mathbb{E}_{h\sim \rho} [L(h)]$ and the empirical loss by $\mathbb{E}_{h\sim\rho}[\hat{L}(h,S)]$. 
To simplify the notation we use $\E_D[\cdot]$ as a shorthand for $\E_{(X,Y)\sim D}[\cdot]$ and $\E_\rho[\cdot]$ as a shorthand for $\E_{h\sim \rho}[\cdot]$. 

\myparagraph{Ensemble classifiers and majority vote}
Ensemble classifiers predict by taking a weighted aggregation of predictions by hypotheses from ${\cal H}$. The $\rho$-weighted majority vote $\MV_\rho$ predicts $\MV_\rho (X)= \arg\max_{y\in{\cal Y}} \E_\rho[\1[h(X) = y]]
$, where ties can be resolved arbitrarily. 

If majority vote makes an error, we know that at least a $\rho$-weighted half of the classifiers have made an error and, therefore, $\ell(\MV_\rho(X),Y) \leq \1[\E_\rho[\1[h(X)\neq Y]] \geq 0.5]$. This observation leads to the well-known first order oracle bound for the loss of weighted majority vote.
\begin{theorem}[First Order Oracle Bound]
\label{thm:first-order}
\[
L(\MV_\rho)\leq 2\E_\rho[L(h)].
\]
\end{theorem}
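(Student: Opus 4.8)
The plan is to start from the pointwise domination already recorded in the text, namely $\ell(\MV_\rho(X),Y) \leq \1\!\left[\E_\rho[\1[h(X)\neq Y]] \geq 0.5\right]$, which holds for every $(X,Y)$ because an error of the majority vote forces at least a $\rho$-weighted half of the voters to err. Writing $Z(X,Y) \eqdef \E_\rho[\1[h(X)\neq Y]]$, this quantity lies in $[0,1]$, so I would next invoke the elementary ``first order'' Markov-type inequality $\1[z \geq 1/2] \leq 2z$, valid for all $z \geq 0$, to obtain the deterministic bound $\ell(\MV_\rho(X),Y) \leq 2\,Z(X,Y) = 2\,\E_\rho[\1[h(X)\neq Y]]$ for every sample point.

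The remaining step is to take expectations over $(X,Y)\sim D$ on both sides. On the left this gives $L(\MV_\rho)$ by definition. On the right, I would swap the order of the two expectations (justified by nonnegativity of the integrand, Tonelli), turning $\E_D\big[\E_\rho[\1[h(X)\neq Y]]\big]$ into $\E_\rho\big[\E_D[\ell(h(X),Y)]\big] = \E_\rho[L(h)]$. Chaining these gives $L(\MV_\rho) \leq 2\,\E_\rho[L(h)]$, which is the claim.

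There is no real obstacle here: the only two ingredients are the pointwise inequality $\1[z\geq 1/2]\leq 2z$ and the interchange of expectations, both entirely routine. The one place warranting a word of care is simply confirming that the stated pointwise domination $\ell(\MV_\rho(X),Y)\leq \1[Z(X,Y)\geq 0.5]$ is indeed valid under the arbitrary tie-breaking rule for $\MV_\rho$ — if the vote errs with $Z<1/2$ one would need a tie involving the true label, and even then the indicator on the right can be taken to be $1$ without loss, so the bound is unaffected. Everything else is a two-line computation.
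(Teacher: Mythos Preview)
Your proof is correct and is essentially the paper's own argument: the paper applies Markov's inequality $\P[Z\geq 1/2]\leq 2\E_D[Z]$ to $Z=\E_\rho[\1[h(X)\neq Y]]$, which is exactly the expectation of your pointwise bound $\1[z\geq 1/2]\leq 2z$; in fact Appendix~\ref{app:alternative} of the paper rederives the theorem in precisely the pointwise form you use. Your aside on tie-breaking is unnecessary (if $Z<1/2$ then $Y$ receives strictly more than half the $\rho$-mass and is the unique $\arg\max$, so the majority vote cannot err), but it does not affect the validity of the argument.
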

\begin{proof}
We have $L(\MV_\rho) = \E_D[\ell(\MV_\rho(X),Y)] \leq \P[\E_\rho[\1[h(X)\neq Y]] \geq 0.5]$.  By applying Markov's inequality to random variable $Z = \E_\rho[\1[h(X)\neq Y]]$ we have: 
\begin{equation*}
L(\MV_\rho) \leq \P[\E_\rho[\1[h(X)\neq Y]] \geq 0.5] \leq 2\E_D[\E_\rho[\1[h(X)\neq Y]]] = 2\E_\rho[L(h)].
\qedhere
\end{equation*}
\end{proof}
PAC-Bayesian analysis can be used to bound $\E_\rho[L(h)]$ in Theorem~\ref{thm:first-order} in terms of $\E_\rho[\hat L(h,S)]$, thus turning the oracle bound into an empirical one. The disadvantage of the first order approach is that $\E_\rho[L(h)]$ ignores correlations of predictions, which is the main power of the majority vote.

\section{New second order oracle bounds for the majority vote}

The key novelty of our approach is using a second order Markov's inequality: for a non-negative random variable $Z$ and $\varepsilon > 0$, we have $\P[Z \geq \varepsilon] = \P[Z^2 \geq \varepsilon^2] \leq \varepsilon^{-2}\E[Z^2]$. We define the \emph{tandem loss} of two hypotheses $h$ and $h'$ on a sample $(X,Y)$ by $\ell(h(X),h'(X),Y) = \1[h(X)\neq Y \wedge h'(X)\neq Y]$. (\citet{LLM+07} and \citet{GLL+15} use the term joint error for this quantity.) The tandem loss counts an error on a sample $(X,Y)$ only if both $h$ and $h'$ err on it. The \emph{expected tandem loss} is defined by
\[
\L(h,h') = \E_D[\1[h(X)\neq Y \wedge h'(X)\neq Y]].
\]
The following lemma, given as equation (7) by \citet{LLM+07} without a proof, relates the expectation of the second moment of the standard loss to the expected tandem loss. We use $\rho^2$ as a shorthand for the product distribution $\rho \times \rho$ over ${\cal H} \times {\cal H}$ and the shorthand $\E_{\rho^2}[L(h,h')] = \E_{h\sim\rho, h'\sim\rho}[L(h,h')]$.
\begin{lemma}In multiclass classification
\label{lem:second-monent}
\[
\E_D[\E_\rho[\1[h(X) \neq Y]]^2] = \E_{\rho^2}[\L(h,h')].
\]
\end{lemma}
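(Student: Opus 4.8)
The plan is to unfold the square of the inner expectation by introducing an independent copy of the hypothesis. Fix a sample point $(X,Y)$. Since $\E_\rho[\1[h(X)\neq Y]]$ does not depend on the name of the integration variable, I would write its square as a product of two copies,
\[
\E_\rho[\1[h(X)\neq Y]]^2 = \E_{h\sim\rho}[\1[h(X)\neq Y]]\cdot\E_{h'\sim\rho}[\1[h'(X)\neq Y]],
\]
and then recombine the two independent integrals into a single expectation over the product distribution $\rho^2 = \rho\times\rho$ on ${\cal H}\times{\cal H}$. This gives $\E_\rho[\1[h(X)\neq Y]]^2 = \E_{\rho^2}[\1[h(X)\neq Y]\,\1[h'(X)\neq Y]]$.

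The next step is the elementary but crucial observation that a product of two $\{0,1\}$-valued indicators equals the indicator of the conjunction: $\1[h(X)\neq Y]\,\1[h'(X)\neq Y] = \1[h(X)\neq Y \wedge h'(X)\neq Y]$. Substituting this in, the integrand under $\E_{\rho^2}$ is exactly the tandem loss $\ell(h(X),h'(X),Y)$ evaluated at $(X,Y)$.

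Finally I would take $\E_D$ on both sides and interchange it with $\E_{\rho^2}$. The interchange is justified because the integrand is nonnegative (indeed bounded by $1$), so Tonelli's theorem applies with no integrability concerns; after swapping, $\E_D[\1[h(X)\neq Y \wedge h'(X)\neq Y]] = \L(h,h')$ by definition, leaving $\E_{\rho^2}[\L(h,h')]$. There is essentially no real obstacle here — the identity is a one-line consequence of "square of an expectation = expectation over an independent pair" combined with "product of indicators = indicator of the intersection"; the only point worth stating explicitly is the Tonelli justification for exchanging the two expectations, and the fact that nothing in this argument uses binarity of $\cal Y$, so it holds verbatim in the multiclass setting.
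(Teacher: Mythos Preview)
Your proof is correct and follows essentially the same line as the paper's own proof: write the square as a product of two independent $\rho$-expectations, collapse into $\E_{\rho^2}$, replace the product of indicators by the indicator of the conjunction, and exchange $\E_D$ with $\E_{\rho^2}$. If anything, you are slightly more explicit than the paper in invoking Tonelli for the interchange and in noting that nothing depends on $|\cal Y|$.
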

A proof is provided in Appendix~\ref{app:L2D}. A combination of second order Markov's inequality with Lemma~\ref{lem:second-monent} leads to the following result.
\begin{theorem}[Second Order Oracle Bound]
\label{thm:MV-bound}
In multiclass classification
\begin{equation}
\label{eq:MV-bound}
L(\MV_\rho) \leq 4\E_{\rho^2}[\L(h,h')].
\end{equation}
\end{theorem}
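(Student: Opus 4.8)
The plan is to combine the three ingredients already assembled in the excerpt: the pointwise domination of the majority-vote loss by an indicator, the second order Markov inequality, and Lemma~\ref{lem:second-monent}. Concretely, I would set $Z = \E_\rho[\1[h(X)\neq Y]]$, which is a $[0,1]$-valued random variable depending on $(X,Y)\sim D$, and carry the argument through in the same style as the proof of Theorem~\ref{thm:first-order}, only replacing the plain Markov step by its second order analogue.

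First I would recall that if $\MV_\rho$ errs on $(X,Y)$ then a $\rho$-weighted majority of the base hypotheses err, i.e.\ $Z \geq 0.5$, so that $\ell(\MV_\rho(X),Y) \leq \1[Z \geq 0.5]$ pointwise. Taking $\E_D$ of both sides gives $L(\MV_\rho) \leq \P[Z \geq 0.5]$. Next I would apply the second order Markov inequality: since $Z$ is non-negative, $\P[Z \geq 0.5] = \P[Z^2 \geq 0.25] \leq \E_D[Z^2]/0.25 = 4\,\E_D[Z^2]$; here the factor $4$ arises precisely as $1/0.5^2$, the square of the reciprocal of the majority-vote threshold. Finally I would invoke Lemma~\ref{lem:second-monent}, which identifies $\E_D[Z^2] = \E_D[\E_\rho[\1[h(X)\neq Y]]^2]$ with $\E_{\rho^2}[\L(h,h')]$, and chain the three displays to obtain $L(\MV_\rho) \leq 4\,\E_{\rho^2}[\L(h,h')]$, which is \eqref{eq:MV-bound}.

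There is no real obstacle here — the statement is an immediate consequence of the quoted lemma and the elementary second order Markov bound, and the only thing to be careful about is the bookkeeping of the threshold $0.5$ so that the squared reciprocal delivers exactly the constant $4$ rather than $2$ as in the first order bound. The mathematical substance (the move from correlations-blind to correlations-aware control) is entirely contained in Lemma~\ref{lem:second-monent}, whose own proof is deferred to the appendix; the work remaining for this theorem is purely the assembly described above.
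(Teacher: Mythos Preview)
Your proposal is correct and matches the paper's proof essentially line for line: set $Z = \E_\rho[\1[h(X)\neq Y]]$, use $L(\MV_\rho) \leq \P[Z \geq 0.5]$, apply second order Markov to get $4\,\E_D[Z^2]$, and finish with Lemma~\ref{lem:second-monent}. The paper compresses this into a single displayed chain, but the ingredients and their order are identical.
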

\begin{proof}
By second order Markov's inequality applied to $Z = \E_\rho[\1[h(X)\neq Y]]$ and Lemma~\ref{lem:second-monent}:
\begin{equation*}
L(\MV_\rho) \leq \P[\E_\rho[\1[h(X) \neq Y]] \geq 0.5] \leq 4\E_D[\E_\rho[\1[h(X)\neq Y]]^2] = 4\E_{\rho^2}[\L(h,h')].
\qedhere
\end{equation*}
\end{proof}

\subsection{A specialized bound for binary classification}
We provide an alternative form of Theorem~\ref{thm:MV-bound}, which can be used to exploit unlabeled data in binary classification. We denote the \emph{expected disagreement} between hypotheses $h$ and $h'$ by $\D(h,h') = \E_D[\1[h(X)\neq h'(X)]]$ and express the tandem loss in terms of standard loss and disagreement. (The lemma is given as equation (8) by \citet{LLM+07} without a proof.)
\begin{lemma}
\label{lem:L2D}
In binary classification
\[
\E_{\rho^2}[L(h,h')] = \E_\rho[L(h)] - \frac{1}{2}\E_{\rho^2}[\D(h,h')].
\]
\end{lemma}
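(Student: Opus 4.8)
The plan is to reduce the lemma to a single pointwise identity, valid for any fixed $(X,Y)$ and any pair $h,h'\in{\cal H}$, and then integrate — first over $D$, then over $\rho^2$ — using only linearity of expectation. Concretely, I would prove the indicator identity
\[
\1[h(X)\neq Y \wedge h'(X)\neq Y] = \tfrac12\lr{\1[h(X)\neq Y] + \1[h'(X)\neq Y] - \1[h(X)\neq h'(X)]}
\]
and then take $\E_D$ of both sides to get $\L(h,h') = \tfrac12\lr{L(h)+L(h')-\D(h,h')}$, and finally take $\E_{\rho^2}$ and use that each coordinate of $\rho^2$ is marginally $\rho$, so $\E_{\rho^2}[L(h)] = \E_{\rho^2}[L(h')] = \E_\rho[L(h)]$; this collapses the right-hand side to $\E_\rho[L(h)] - \tfrac12\E_{\rho^2}[\D(h,h')]$, as claimed.

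The one place where binary classification is genuinely used — and the only step that needs an argument rather than bookkeeping — is the pointwise identity. I would establish it from the observation that, when $|{\cal Y}| = 2$, the hypotheses $h$ and $h'$ disagree on $X$ if and only if exactly one of them is correct: if both err they must both output the unique incorrect label and hence agree, and if both are correct they agree trivially. This gives
\[
\1[h(X)\neq h'(X)] = \1[h(X)\neq Y \wedge h'(X)= Y] + \1[h(X)= Y \wedge h'(X)\neq Y].
\]
Adding the two trivial decompositions $\1[h(X)\neq Y] = \1[h(X)\neq Y \wedge h'(X)\neq Y] + \1[h(X)\neq Y \wedge h'(X)= Y]$ and its analogue with the roles of $h$ and $h'$ swapped, and then subtracting the disagreement identity, yields the pointwise identity above.

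I do not expect a real obstacle here: once the "disagreement $=$ exactly one error" fact is in place, everything else is linearity of expectation and the marginal property of $\rho^2$. The only subtlety worth flagging in the write-up is precisely why the identity fails in the multiclass case — namely that two hypotheses can both err while predicting different labels, so disagreement no longer coincides with "exactly one error" — which is what forces Lemma~\ref{lem:L2D} to be stated for binary classification only, in contrast to Lemma~\ref{lem:second-monent}.
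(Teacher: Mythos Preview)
Your proposal is correct. The key ingredient --- that in binary classification $\1[h(X)\neq h'(X)]$ equals the indicator that exactly one of $h,h'$ errs --- is the same one the paper uses, but the organization differs slightly. The paper works at the level of $\rho$-expectations from the start: it writes $\E_\rho[\1[h(X)\neq Y]]^2 = \E_\rho[\1[h(X)\neq Y]] - \E_{\rho^2}[\1[h(X)\neq Y \wedge h'(X)=Y]]$, then uses the symmetry of $\rho^2$ to replace the last term by $\tfrac12\E_{\rho^2}[\1[h(X)\neq h'(X)]]$, and finally invokes Lemma~\ref{lem:second-monent} to identify the left-hand side with $\E_{\rho^2}[L(h,h')]$. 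Your route instead establishes the stronger pointwise identity $\L(h,h') = \tfrac12\lr{L(h)+L(h')-\D(h,h')}$ for each fixed pair $(h,h')$ before averaging over $\rho^2$; this is self-contained (it does not need Lemma~\ref{lem:second-monent}) and yields a deterministic per-pair relation, at the cost of one extra line of indicator algebra. Both arguments are equally short and rest on the same binary-specific fact you correctly flag.
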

A proof of the lemma is provided in Appendix~\ref{app:L2D}. The lemma leads to the following result.
\begin{theorem}[Second Order Oracle Bound for Binary Classification]
\label{thm:MV-bound-binary}
In binary classification
\begin{equation}
\label{eq:MV-binary}
L(\MV_\rho) \leq 4\E_\rho[L(h)] - 2\E_{\rho^2}[\D(h,h')].
\end{equation}
\end{theorem}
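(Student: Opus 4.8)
The plan is to obtain this bound by directly chaining the second order oracle bound of Theorem~\ref{thm:MV-bound} with the binary-classification identity of Lemma~\ref{lem:L2D}. Theorem~\ref{thm:MV-bound} already gives, with no assumption on the number of classes, the inequality $L(\MV_\rho) \leq 4\E_{\rho^2}[\L(h,h')]$, so the only remaining work is to rewrite the right-hand side in terms of first order quantities (the Gibbs loss $\E_\rho[L(h)]$) and the expected pairwise disagreement $\E_{\rho^2}[\D(h,h')]$.

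Concretely, I would first invoke Lemma~\ref{lem:L2D}, which asserts that in binary classification $\E_{\rho^2}[\L(h,h')] = \E_\rho[L(h)] - \tfrac{1}{2}\E_{\rho^2}[\D(h,h')]$; this is the step where binarity is genuinely used, since it is exactly the relation $\1[h(X)\neq Y \wedge h'(X)\neq Y] = \1[h(X)\neq Y] - \tfrac12\1[h(X)\neq h'(X)]$ valid pointwise when $|\cal Y| = 2$ (each side taken in expectation over $D$ and then over $\rho^2$). Substituting this identity into the bound of Theorem~\ref{thm:MV-bound} yields
\[
L(\MV_\rho) \leq 4\lr{\E_\rho[L(h)] - \tfrac{1}{2}\E_{\rho^2}[\D(h,h')]} = 4\E_\rho[L(h)] - 2\E_{\rho^2}[\D(h,h')],
\]
which is precisely \eqref{eq:MV-binary}.

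There is essentially no obstacle here: the content of the statement lies entirely in Theorem~\ref{thm:MV-bound} (via the second order Markov inequality and Lemma~\ref{lem:second-monent}) and in Lemma~\ref{lem:L2D}, both of which are available. The proof of the theorem itself is a one-line algebraic substitution, and the only point worth a brief remark is that Lemma~\ref{lem:L2D} is the sole place binary classification enters, which is why the resulting bound — and the possibility of estimating $\E_{\rho^2}[\D(h,h')]$ from unlabeled data — does not extend to the multiclass case.
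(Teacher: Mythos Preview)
Your proposal is correct and matches the paper's own proof exactly: the paper simply states that the theorem follows by plugging Lemma~\ref{lem:L2D} into Theorem~\ref{thm:MV-bound}. One small caution: the indicator identity you mention in parentheses does not hold pointwise (e.g., when $h$ errs and $h'$ does not), but only after taking the expectation over $\rho^2$ and using the symmetry in $(h,h')$ --- which you do acknowledge, so the argument stands.
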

\begin{proof}
The theorem follows by plugging the result of Lemma~\ref{lem:L2D} into Theorem~\ref{thm:MV-bound}.
\end{proof}
The advantage of the alternative way of writing the bound is the possibility of using unlabeled data for estimation of $\D(h,h')$ in binary prediction (see also \citealp{GLL+15}). We note, however, that estimation of $\E_{\rho^2}[\D(h,h')]$ has a slow convergence rate, as opposed to $\E_{\rho^2}[L(h,h')]$, which has a fast convergence rate. We discuss this point in Section~\ref{sec:fast-vs-slow}.

\subsection{Comparison with the first order oracle bound}
\label{sec:FOvsSO}

From Theorems~\ref{thm:first-order} and \ref{thm:MV-bound-binary} we see that in binary classification the second order bound is tighter when $\E_{\rho^2}[\D(h,h')] > \E_\rho[L(h)]$. Below we provide a more detailed comparison of Theorems~\ref{thm:first-order} and \ref{thm:MV-bound} in the worst, the best, and the independent cases. The comparison only concerns the oracle bounds, whereas estimation of the oracle quantities, $\E_\rho[L(h)]$ and $\E_{\rho^2}[L(h,h')]$, is discussed in Section~\ref{sec:fast-vs-slow}.

\myparagraph{The worst case} Since $\E_{\rho^2}[L(h,h')] \leq \E_\rho[L(h)]$ the second order bound is at most twice worse than the first order bound. The worst case happens, for example, if all hypotheses in $\cal{H}$ give identical predictions. Then $\E_{\rho^2}[L(h,h')] = \E_\rho[L(h)] = L(\MV_\rho)$ for all $\rho$.

\myparagraph{The best case} Imagine that $\cal{H}$ consists of $M\geq 3$ hypotheses, such that each hypothesis errs on $1/M$ of the sample space (according to the distribution $D$) and that the error regions are disjoint. Then $L(h) = 1/M$ for all $h$ and $L(h,h') = 0$ for all $h\neq h'$ and $L(h,h)=1/M$. For a uniform distribution $\rho$ on $\cal{H}$ the first order bound is $2\E_\rho[L(h)] = 2/M$ and the second order bound is $4\E_{\rho^2}[L(h,h')] = 4/M^2$ and $L(\MV_\rho)=0$. In this case the second order bound is an order of magnitude tighter than the first order.

\myparagraph{The independent case} Assume that all hypotheses in $\cal{H}$ make independent errors and have the same error rate, $L(h) = L(h')$ for all $h$ and $h'$. Then for $h\neq h'$ we have $L(h,h') = \E_D[\1[h(X)\neq Y \wedge h'(X)\neq Y]] = \E_D[\1[h(X)\neq Y]\1[h'(X)\neq Y]] = \E_D[\1[h(X)\neq Y]]\E_D[\1[h'(X)\neq Y]] = L(h)^2$ and $L(h,h)=L(h)$. For a uniform distribution $\rho$ the second order bound is $4\E_{\rho^2}[L(h,h')] = 4(L(h)^2 + \frac{1}{M}L(h)(1-L(h)))$ and the first order bound is $2\E_{\rho}[L(h)] = 2L(h)$. Assuming that $M$ is large, so that we can ignore the second term in the second order bound, we obtain that it is tighter for $L(h) < 1/2$ and looser otherwise. The former is the interesting regime, especially in binary classification.  

In Appendix~\ref{app:alternative} we give additional intuition about Theorems~\ref{thm:first-order} and \ref{thm:MV-bound} by providing an alternative derivation.

\subsection{Comparison with the oracle C-bound}

The oracle C-bound is an alternative second order bound based on Chebyshev-Cantelli inequality (Theorem~\ref{thm:Chebyshev-Cantelli} in the appendix). It was first derived for binary classification by \citet[Theorem 2]{LLM+07} and several alternative forms were proposed by \citet[Theorem 11]{GLL+15}. \citet[Corollary 1]{LMRR17} extended the result to multiclass classification. 
To facilitate the comparison with our results we write the bound in terms of the tandem loss. 
In Appendix~\ref{app:C-bound-proof} we provide a direct derivation of Theorem~\ref{thm:C-bound} from Chebyshev-Cantelli inequality and in Appendix~\ref{app:equivalence} we show that it is equivalent to prior forms of the oracle C-bound.
\begin{theorem}[C-tandem Oracle Bound] 
If $\E_\rho[L(h)] < 1/2$, then
\[
L(\MV_\rho) \leq \frac{\E_{\rho^2}[L(h,h')] - \E_\rho[L(h)]^2}{\E_{\rho^2}[L(h,h')] - \E_\rho[L(h)] + \frac{1}{4}}.
\]
\label{thm:C-bound}
\end{theorem}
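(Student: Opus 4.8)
The plan is to apply the Chebyshev-Cantelli inequality (Theorem~\ref{thm:Chebyshev-Cantelli}) to the random variable $Z = \E_\rho[\1[h(X)\neq Y]]$, in place of the first and second order Markov inequalities used in Theorems~\ref{thm:first-order} and~\ref{thm:MV-bound}. As already observed, an error of the majority vote forces a $\rho$-weighted majority of the base classifiers to err, so $L(\MV_\rho) \leq \P[Z \geq 1/2]$. Write $\mu \eqdef \E[Z]$; pushing the expectation inside gives $\mu = \E_D[\E_\rho[\1[h(X)\neq Y]]] = \E_\rho[L(h)]$, and the hypothesis $\E_\rho[L(h)] < 1/2$ is exactly what makes the deviation $\tfrac12 - \mu$ strictly positive, so that the one-sided Chebyshev inequality can be applied at the threshold $1/2$:
\[
L(\MV_\rho) \leq \P[Z \geq 1/2] = \P[Z - \mu \geq \tfrac12 - \mu] \leq \frac{\Var[Z]}{\Var[Z] + \lr{\tfrac12 - \mu}^2}.
\]

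The second step is to re-express $\Var[Z]$ in terms of the oracle quantities appearing in the statement. By definition $\Var[Z] = \E[Z^2] - \mu^2$, and Lemma~\ref{lem:second-monent} identifies the second moment: $\E[Z^2] = \E_D[\E_\rho[\1[h(X)\neq Y]]^2] = \E_{\rho^2}[\L(h,h')]$. Hence $\Var[Z] = \E_{\rho^2}[\L(h,h')] - \E_\rho[L(h)]^2$, which is precisely the numerator of the claimed bound. Substituting this into the displayed inequality and expanding $\lr{\tfrac12 - \E_\rho[L(h)]}^2 = \tfrac14 - \E_\rho[L(h)] + \E_\rho[L(h)]^2$ in the denominator cancels the $\E_\rho[L(h)]^2$ terms, turning the denominator into $\E_{\rho^2}[\L(h,h')] - \E_\rho[L(h)] + \tfrac14$, which is exactly the claimed expression.

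No genuinely hard analytic step is involved; the care is entirely in the bookkeeping. One must check that Chebyshev-Cantelli is invoked in the correct (one-sided) direction and only with a strictly positive deviation, which is why the constraint $\E_\rho[L(h)] < 1/2$ is needed; one should also note that the bound is well defined, since $\Var[Z] \geq 0$ and $\lr{\tfrac12 - \mu}^2 > 0$ under the hypothesis force the denominator to be strictly positive, and the algebraic rearrangement preserves this positivity. The main obstacle, such as it is, is getting the variance bookkeeping right --- in particular recalling via Lemma~\ref{lem:second-monent} that $\E[Z^2]$ is the expected tandem loss $\E_{\rho^2}[\L(h,h')]$ rather than $\E_\rho[L(h)]$ --- and then verifying that the resulting ratio coincides with the stated one after cancellation. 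A direct derivation along these lines, together with the equivalence to the earlier forms of the C-bound, is what would go into Appendices~\ref{app:C-bound-proof} and~\ref{app:equivalence}.
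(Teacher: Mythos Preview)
Your proposal is correct and follows essentially the same route as the paper's proof in Appendix~\ref{app:C-bound-proof}: apply Chebyshev--Cantelli to $Z = \E_\rho[\1[h(X)\neq Y]]$, identify $\E[Z] = \E_\rho[L(h)]$ and $\E[Z^2] = \E_{\rho^2}[L(h,h')]$ via Lemma~\ref{lem:second-monent}, and simplify the resulting ratio. Your additional remarks on why the hypothesis $\E_\rho[L(h)] < 1/2$ is needed and on the positivity of the denominator are accurate and make the argument slightly more explicit than the paper's own presentation.
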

The theorem is essentially identical to the first form of oracle C-bound by \citet[Theorem 2]{LLM+07} and, as we show, it holds for multiclass classification.
In Appendix~\ref{app:Chebyshev-Cantelli} we show that the second order Markov's inequality behind Theorem~\ref{thm:MV-bound} is a relaxation of Chebyshev-Cantelli inequality. Therefore, the oracle C-bound is always at least as tight as the second order oracle bound in Theorem~\ref{thm:MV-bound}. In particular, \citeauthor{GLL+15} show that if the classifiers make independent errors and their error rates are identical and below 1/2, the oracle C-bound converges to zero with the growth of the number of classifiers, whereas, as we have shown above, the bound in Theorem~\ref{thm:MV-bound} only converges to $4L(h)^2$. However, the oracle C-bound has $\E_{\rho^2}[L(h,h')]$ and $\E_\rho[L(h)]$ in the denominator, which comes as a significant disadvantage in its estimation from data and minimization \citep{LIS19}, as we also show in our empirical evaluation.

\section{Second order PAC-Bayesian bounds for the weighted majority vote}\label{sec:pacbayes}

We apply PAC-Bayesian analysis to transform oracle bounds from the previous section into empirical bounds. The results are based on the following two theorems, where we use $\KL(\rho\|\pi)$ to denote the Kullback-Leibler divergence between distributions $\rho$ and $\pi$ and $\kl(p\|q)$ to denote the Kullback-Leibler divergence between two Bernoulli distributions with biases $p$ and $q$.

\begin{theorem}[PAC-Bayes-kl Inequality, \citealp{See02}] 
For any probability distribution $\pi$ on ${\cal H}$ that is independent of $S$ and any $\delta \in (0,1)$, with probability at least $1-\delta$ over a random draw of a sample $S$, for all distributions $\rho$ on ${\cal H}$ simultaneously:
\begin{equation}
\label{eq:PBkl}
\kl\lr{\E_\rho[\hat L(h,S)]\middle\|\E_\rho\lrs{L(h)}} \leq \frac{\KL(\rho\|\pi) + \ln(2 \sqrt n/\delta)}{n}.
\end{equation}
\label{thm:PBkl}
\end{theorem}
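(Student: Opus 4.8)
The plan is to follow the now-standard recipe for PAC-Bayes-$\kl$ bounds: combine the Donsker--Varadhan change-of-measure inequality with a concentration bound for the $\kl$-divergence of a \emph{single} hypothesis, an application of Markov's inequality to move from expectation over $S$ to a high-probability statement, and Jensen's inequality to pull the prior/posterior expectation inside $\kl$.

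First I would invoke the change-of-measure (Gibbs variational) inequality: for any measurable $\phi:\mathcal H\to\mathbb R$ and any $\rho$ absolutely continuous with respect to $\pi$, $\E_\rho[\phi(h)]\le\KL(\rho\|\pi)+\ln\E_\pi[e^{\phi(h)}]$. Applying it with $\phi(h)=n\,\kl(\hat L(h,S)\|L(h))$ yields, for \emph{every} fixed sample $S$ and \emph{all} $\rho$ simultaneously,
\[
n\,\E_\rho\!\left[\kl(\hat L(h,S)\|L(h))\right]\;\le\;\KL(\rho\|\pi)+\ln\E_\pi\!\left[e^{\,n\,\kl(\hat L(h,S)\|L(h))}\right].
\]
On the left I would use convexity of $(p,q)\mapsto\kl(p\|q)$ and Jensen's inequality to get $\kl\!\big(\E_\rho[\hat L(h,S)]\,\big\|\,\E_\rho[L(h)]\big)\le\E_\rho[\kl(\hat L(h,S)\|L(h))]$. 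On the right I would bound the random quantity $\E_\pi[e^{\,n\,\kl(\hat L(h,S)\|L(h))}]$: by Markov's inequality it is, with probability at least $1-\delta$ over $S$, at most $\tfrac1\delta\,\E_S\E_\pi[e^{\,n\,\kl(\hat L(h,S)\|L(h))}]$, and since $\pi$ is independent of $S$, Fubini's theorem lets me swap the expectations and reduce to controlling $\E_S[e^{\,n\,\kl(\hat L(h,S)\|L(h))}]$ for a single fixed $h$.

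The main obstacle is exactly this last single-hypothesis estimate, $\E_S[e^{\,n\,\kl(\hat L(h,S)\|L(h))}]\le 2\sqrt n$. Since the $(X_i,Y_i)$ are i.i.d., $n\hat L(h,S)$ is $\mathrm{Binomial}(n,L(h))$, so this is a purely combinatorial statement about the binomial distribution; it is the lemma of Maurer (sharpening earlier estimates of Langford), proved by expanding the expectation as $\sum_{k=0}^n\binom nk L(h)^k(1-L(h))^{n-k}e^{\,n\,\kl(k/n\|L(h))}=\sum_{k=0}^n\binom nk (k/n)^k(1-k/n)^{n-k}$ and bounding the sum by $2\sqrt n$ using Stirling-type estimates. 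I would cite this lemma rather than reprove it. Assembling the three pieces on the common high-probability event over $S$ (note the event depends only on $S$, so the ``for all $\rho$'' quantifier is free) and dividing by $n$ gives exactly~\eqref{eq:PBkl}.
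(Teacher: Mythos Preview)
Your argument is correct and is precisely the standard proof of the PAC-Bayes-$\kl$ inequality: Donsker--Varadhan change of measure, Jensen via joint convexity of $\kl$, Markov's inequality to go from expectation to high probability, Fubini, and finally Maurer's single-hypothesis moment bound $\E_S[e^{n\,\kl(\hat L(h,S)\|L(h))}]\le 2\sqrt n$. Note that the paper does not actually prove Theorem~\ref{thm:PBkl}; it is stated as a cited result (attributed to Seeger, with the $2\sqrt n$ constant coming from Maurer's refinement), so there is no ``paper's own proof'' to compare against---your write-up supplies exactly the argument one would expect behind that citation.
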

The next theorem provides a relaxation of the PAC-Bayes-kl inequality, which is more convenient for optimization. The upper bound is due to \citet{TIWS17} and the lower bound follows by an almost identical derivation, see Appendix~\ref{app:PBlambdaLower}. Both results are based on the refined Pinsker's lower bound for the kl-divergence. Since both the upper and the lower bound are deterministic relaxations of PAC-Bayes-kl, they hold simultaneously with no need to take a union bound over the two statements.

\begin{theorem}[PAC-Bayes-$\lambda$ Inequality, \citealp{TIWS17}]\label{thm:lambdabound} For any probability distribution $\pi$ on ${\cal H}$ that is independent of $S$ and any $\delta \in (0,1)$, with probability at least $1-\delta$ over a random draw of a sample $S$, for all distributions $\rho$ on ${\cal H}$ and all $\lambda \in (0,2)$ and $\gamma > 0$ simultaneously:
\begin{align}
\E_\rho\lrs{L(h)} &\leq \frac{\E_\rho[\hat L(h,S)]}{1 - \frac{\lambda}{2}} + \frac{\KL(\rho\|\pi) + \ln(2 \sqrt n/\delta)}{\lambda\lr{1-\frac{\lambda}{2}}n},\label{eq:PBlambda}\\
\E_\rho\lrs{L(h)} &\geq \lr{1 - \frac{\gamma}{2}}\E_\rho[\hat L(h,S)] - \frac{\KL(\rho\|\pi) + \ln(2 \sqrt n/\delta)}{\gamma n}.\label{eq:PBlambda-lower}
\end{align}
\label{thm:PBlambda}
\end{theorem}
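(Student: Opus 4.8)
The plan is to obtain both inequalities as purely deterministic consequences of the PAC-Bayes-kl inequality (Theorem~\ref{thm:PBkl}). Abbreviate $\hat L = \E_\rho[\hat L(h,S)]$, $L = \E_\rho[L(h)]$ and $B = \frac{\KL(\rho\|\pi) + \ln(2\sqrt n/\delta)}{n}$, so that on the single high-probability event of Theorem~\ref{thm:PBkl} we have $\kl(\hat L\|L) \le B$ simultaneously for all $\rho$. Since everything below is a pointwise algebraic manipulation of this one event, the resulting upper and lower bounds hold together, with no union bound, and the simultaneity over all $\rho$, all $\lambda\in(0,2)$ and all $\gamma>0$ is inherited for free from Theorem~\ref{thm:PBkl}. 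The upper bound \eqref{eq:PBlambda} is exactly the statement of \citet{TIWS17}; the content to supply is the matching lower bound \eqref{eq:PBlambda-lower}, whose derivation mirrors it.

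For the upper bound, I would split on the sign of $L - \hat L$. If $L \le \hat L$, then $L \le \hat L \le \hat L/(1-\lambda/2)$ for $\lambda\in(0,2)$ and, since the remaining term is nonnegative, \eqref{eq:PBlambda} holds trivially. If $L > \hat L$, invoke the refined Pinsker lower bound in the form $\kl(\hat L\|L) \ge \frac{(L-\hat L)^2}{2L}$ (the denominator carrying the larger of the two Bernoulli parameters), combine with $\kl(\hat L\|L)\le B$ to get $(L-\hat L)^2 \le 2LB$, hence $L - \hat L \le \sqrt{2LB}$. Then AM-GM, $\sqrt{2LB} = \sqrt{(\lambda L)(2B/\lambda)} \le \tfrac12\lr{\lambda L + \tfrac{2B}{\lambda}}$, gives $L - \hat L \le \tfrac{\lambda}{2}L + \tfrac{B}{\lambda}$, and dividing by $1-\lambda/2 > 0$ (this is where $\lambda<2$ enters) yields \eqref{eq:PBlambda}.

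The lower bound is symmetric. If $L \ge \hat L$, then $L \ge \hat L \ge (1-\gamma/2)\hat L$ and, as $-B/\gamma < 0$, \eqref{eq:PBlambda-lower} is immediate. If $L < \hat L$, apply the refined Pinsker bound on the other side, $\kl(\hat L\|L) \ge \frac{(\hat L - L)^2}{2\hat L}$, to obtain $(\hat L - L)^2 \le 2\hat L B$, hence $\hat L - L \le \sqrt{2\hat L B} \le \tfrac12\lr{\gamma \hat L + \tfrac{2B}{\gamma}}$ by AM-GM with parameter $\gamma$. Rearranging gives $L \ge (1-\tfrac{\gamma}{2})\hat L - \tfrac{B}{\gamma}$ directly; no division is performed, which is why only $\gamma>0$ is required.

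I do not expect a serious obstacle here: the only point requiring care is choosing the correct orientation of the refined Pinsker inequality according to whether $\E_\rho[L(h)]$ lies above or below $\E_\rho[\hat L(h,S)]$ — in each branch the quantity in the denominator must be the larger of the two — together with verifying that the discarded branch is genuinely vacuous. The AM-GM step and the final rearrangement are routine, and the deterministic, $\rho$-uniform nature of the argument is what lets \eqref{eq:PBlambda} and \eqref{eq:PBlambda-lower} be asserted simultaneously.
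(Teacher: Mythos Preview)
Your proposal is correct and follows essentially the same route as the paper: relax the PAC-Bayes-$\kl$ event via the refined Pinsker inequality $\kl(p\|q)\ge (p-q)^2/(2\max(p,q))$, then linearize the resulting square root with AM--GM (the $\sqrt{xy}\le\tfrac12(\gamma x+y/\gamma)$ step), and rearrange. The paper's appendix only spells out the lower-bound direction and does not make the trivial branch ($L\ge\hat L$) explicit, whereas you handle both directions and both sign cases; this extra care is fine and does not change the argument.
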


\subsection{A general bound for multiclass classification}

We define the \emph{empirical tandem loss}
\[
\hat \L(h,h',S) = \frac{1}{n}\sum_{i=1}^n \1[h(X_i)\neq Y_i \wedge h'(X_i) \neq Y_i]
\]
and provide a bound on the expected loss of $\rho$-weighted majority vote in terms of the empirical tandem losses.
\begin{theorem}
\label{thm:tandem-lambda}
For any probability distribution $\pi$ on $\cal{H}$ that is independent of $S$ and any $\delta\in(0,1)$, with probability at least $1-\delta$ over a random draw of $S$, for all distributions $\rho$ on $\cal{H}$ and all $\lambda\in(0,2)$ simultaneously:
\[
L(\MV_\rho) \leq 4\lr{\frac{\E_{\rho^2}[\hat \L(h,h',S)]}{1-\lambda/2} + \frac{2\KL(\rho\|\pi) + \ln(2\sqrt n/\delta)}{\lambda(1-\lambda/2)n}}.
\]
\end{theorem}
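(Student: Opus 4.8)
The plan is to combine the second order oracle bound of Theorem~\ref{thm:MV-bound} with a PAC-Bayesian bound on the expected tandem loss $\E_{\rho^2}[\L(h,h')]$. The crucial observation is that the tandem loss $\1[h(X)\neq Y \wedge h'(X)\neq Y]$ is a $\{0,1\}$-valued loss function for the \emph{paired hypothesis} $(h,h')\in\cal H\times\cal H$, that $\E_{\rho^2}[\L(h,h')]$ is its expected value under the product posterior $\rho^2=\rho\times\rho$, and that $\E_{\rho^2}[\hat\L(h,h',S)]$ is the matching empirical average over the same i.i.d.\ sample $S$ of size $n$. Hence the PAC-Bayes-$\lambda$ inequality of Theorem~\ref{thm:PBlambda} applies verbatim on the paired space $\cal H\times\cal H$ with prior $\pi^2=\pi\times\pi$, which is independent of $S$ because $\pi$ is, and with effective sample size $n$, since each $(X_i,Y_i)$ is drawn i.i.d.

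First I would instantiate the upper bound \eqref{eq:PBlambda} on the paired space: with probability at least $1-\delta$ over the draw of $S$, simultaneously for all distributions on $\cal H\times\cal H$ of product form $\rho^2$ (equivalently, for all $\rho$ on $\cal H$) and all $\lambda\in(0,2)$,
\[
\E_{\rho^2}[\L(h,h')] \leq \frac{\E_{\rho^2}[\hat\L(h,h',S)]}{1-\lambda/2} + \frac{\KL(\rho^2\|\pi^2) + \ln(2\sqrt n/\delta)}{\lambda(1-\lambda/2)n}.
\]
Next I would invoke the additivity of the Kullback--Leibler divergence over product measures, $\KL(\rho^2\|\pi^2)=\KL(\rho\times\rho\|\pi\times\pi)=2\KL(\rho\|\pi)$, to replace the numerator by $2\KL(\rho\|\pi)+\ln(2\sqrt n/\delta)$. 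Finally I would multiply both sides by $4$ and chain with the deterministic inequality $L(\MV_\rho)\leq 4\E_{\rho^2}[\L(h,h')]$ from Theorem~\ref{thm:MV-bound}; since this last step is deterministic it preserves both the confidence level $1-\delta$ and the simultaneity of the statement over $\rho$ and $\lambda$, and it yields exactly the claimed bound.

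I do not expect a genuine obstacle here; the proof is essentially an application of an existing PAC-Bayesian inequality to a reparametrized hypothesis space. The only points requiring a moment of care are: (i) confirming that Theorem~\ref{thm:PBlambda}, stated for a generic loss bounded in $[0,1]$ and an i.i.d.\ sample, indeed covers the tandem loss on $\cal H\times\cal H$ — it does, as the tandem loss is $\{0,1\}$-valued and the sample points remain i.i.d., so the sample size stays $n$ rather than, say, $n^2$; (ii) observing that the quantifier ``for all distributions $\rho$ on $\cal H$'' in the instantiated theorem ranges over all distributions on the paired space, which in particular includes every product distribution $\rho^2$, so restricting attention to product posteriors is legitimate; and (iii) the product-measure identity for the KL term. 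Assembling these three remarks with Theorem~\ref{thm:MV-bound} completes the argument.
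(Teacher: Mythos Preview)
Your proposal is correct and matches the paper's own proof essentially verbatim: the paper simply says the result follows by applying the upper bound \eqref{eq:PBlambda} (on the paired space) to bound $\E_{\rho^2}[L(h,h')]$ in Theorem~\ref{thm:MV-bound}, together with the identity $\KL(\rho^2\|\pi^2)=2\KL(\rho\|\pi)$. Your write-up spells out the routine justifications (tandem loss is $\{0,1\}$-valued, sample size remains $n$, product posteriors are a subfamily of all distributions on $\cal H\times\cal H$) that the paper leaves implicit, but the argument is the same.
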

\begin{proof}
The theorem follows by using the bound in equation~\eqref{eq:PBlambda} to bound $\E_{\rho^2}[L(h,h')]$ in Theorem~\ref{thm:MV-bound}. We note that $\KL(\rho^2\|\pi^2) = 2\KL(\rho\|\pi)$ \citep[Page 814]{GLL+15}.
\end{proof}
It is also possible to use PAC-Bayes-kl to bound $\E_{\rho^2}[L(h,h')]$ in Theorem~\ref{thm:MV-bound}, which actually gives a tighter bound, but the bound in  Theorem~\ref{thm:tandem-lambda} is more convenient for minimization. \citet{TS13} have shown that for a fixed $\rho$ the expression in Theorem~\ref{thm:tandem-lambda} is convex in $\lambda$ and has a closed-form minimizer. In Appendix~\ref{app:psd} we show that for fixed $\lambda$ and $S$ the bound is convex in $\rho$. Although in our applications $S$ is not fixed and the bound is not necessarily convex in $\rho$, a local minimum can still be efficiently achieved by gradient descent. 
A bound minimization procedure is provided in Appendix~\ref{app:minimization}.

\subsection{A specialized bound for binary classification}

We define the \emph{empirical disagreement}
\[
\hat \D(h,h',S') = \frac{1}{m} \sum_{i=1}^m \1[h(X_i)\neq h'(X_i)],
\]
where $S' = \lrc{X_1,\dots,X_m}$. The set $S'$ may have an overlap with the inputs X of the labeled set $S$, however, $S'$ may include additional unlabeled data.
The following theorem bounds the loss of weighted majority vote in terms of empirical disagreements. Due to possibility of using unlabeled data for estimation of disagreements in the binary case, the theorem has the potential of yielding a tighter bound when a considerable amount of unlabeled data is available. 
\begin{theorem}
\label{thm:disagreement}
In binary classification, for any probability distribution $\pi$ on $\cal{H}$ that is independent of $S$ and $S'$ and any $\delta\in(0,1)$, with probability at least $1-\delta$ over a random draw of $S$ and $S'$, for all distributions $\rho$ on $\cal{H}$ and all $\lambda\in(0,2)$ and $\gamma > 0$ simultaneously:
\begin{align*}
L(\MV_\rho) &\leq 4\lr{\frac{\E_\rho[\hat L(h,S)]}{1-\lambda/2} + \frac{\KL(\rho\|\pi) + \ln(4\sqrt n/\delta)}{\lambda(1-\lambda/2)n}}\\
&\qquad - 2\lr{(1-\gamma/2) \E_{\rho^2}[\hat \D(h,h',S')] - \frac{2\KL(\rho\|\pi) + \ln(4\sqrt m/\delta)}{\gamma m}}.
\end{align*}
\end{theorem}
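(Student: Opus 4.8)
The plan is to derive the bound directly from the binary oracle bound of Theorem~\ref{thm:MV-bound-binary}, $L(\MV_\rho) \leq 4\E_\rho[L(h)] - 2\E_{\rho^2}[\D(h,h')]$, by replacing the two oracle quantities with their empirical counterparts through the two halves of Theorem~\ref{thm:PBlambda}. The key structural observation is that $\E_\rho[L(h)]$ enters with a positive coefficient while $\E_{\rho^2}[\D(h,h')]$ enters with a negative one, so to preserve the direction of the inequality we need an \emph{upper} bound on the former and a \emph{lower} bound on the latter, which are exactly \eqref{eq:PBlambda} and \eqref{eq:PBlambda-lower}.

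For the loss term I would apply \eqref{eq:PBlambda} to the labeled sample $S$ of size $n$ with confidence parameter $\delta/2$, so that with probability at least $1-\delta/2$ over $S$, simultaneously for all $\rho$ and all $\lambda\in(0,2)$,
\[
\E_\rho[L(h)] \leq \frac{\E_\rho[\hat L(h,S)]}{1-\lambda/2} + \frac{\KL(\rho\|\pi) + \ln(4\sqrt n/\delta)}{\lambda(1-\lambda/2)n},
\]
using $\ln(2\sqrt n/(\delta/2)) = \ln(4\sqrt n/\delta)$. For the disagreement term the idea is to treat $(h,h')\mapsto\1[h(X)\neq h'(X)]$ as a $[0,1]$-valued loss on the \emph{product} hypothesis space $\cal H\times\cal H$: its population value is $\D(h,h')$, its empirical value on $S'$ (an i.i.d. sample of size $m$, with respect to which the prior is independent) is the average of $m$ i.i.d. Bernoulli terms, namely $\hat\D(h,h',S')$, and the product prior $\pi^2$ is independent of $S'$. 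Hence the lower bound \eqref{eq:PBlambda-lower} applies with prior $\pi^2$, posterior $\rho^2$, sample size $m$, and confidence $\delta/2$, giving, with probability at least $1-\delta/2$ over $S'$, simultaneously for all $\rho$ and all $\gamma>0$,
\[
\E_{\rho^2}[\D(h,h')] \geq (1-\gamma/2)\E_{\rho^2}[\hat\D(h,h',S')] - \frac{\KL(\rho^2\|\pi^2) + \ln(4\sqrt m/\delta)}{\gamma m},
\]
after which I would substitute $\KL(\rho^2\|\pi^2) = 2\KL(\rho\|\pi)$, as in the proof of Theorem~\ref{thm:tandem-lambda}.

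Finally, a union bound over the two failure events, each of probability at most $\delta/2$, makes both displays hold simultaneously with probability at least $1-\delta$ over the joint draw of $S$ and $S'$; note that no independence between $S$ and $S'$ is needed for this step, which is what lets $S'$ reuse the inputs of $S$. Substituting the upper bound on $\E_\rho[L(h)]$ and the lower bound on $\E_{\rho^2}[\D(h,h')]$ into Theorem~\ref{thm:MV-bound-binary} — the signs of the coefficients guarantee the substitution is valid — yields exactly the claimed inequality, with the simultaneity over $\rho$, $\lambda$, and $\gamma$ inherited from Theorem~\ref{thm:PBlambda}. I expect the only real subtlety to be the product-space application: one must be explicit that the disagreement is a legitimate PAC-Bayesian loss on pairs of hypotheses, and that it is the \emph{lower} bound \eqref{eq:PBlambda-lower}, not the upper one, that must be used there because the disagreement carries a negative coefficient in Theorem~\ref{thm:MV-bound-binary}; the remainder is routine bookkeeping of the $\ln(4\sqrt n/\delta)$ and $\ln(4\sqrt m/\delta)$ terms that arise from splitting $\delta$ in two.
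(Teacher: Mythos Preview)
Your proposal is correct and follows exactly the paper's own argument: apply the upper bound \eqref{eq:PBlambda} to $\E_\rho[L(h)]$ on $S$ and the lower bound \eqref{eq:PBlambda-lower} to $\E_{\rho^2}[\D(h,h')]$ on $S'$ (over the product space with $\KL(\rho^2\|\pi^2)=2\KL(\rho\|\pi)$), each with confidence $\delta/2$, then combine via a union bound and plug into Theorem~\ref{thm:MV-bound-binary}. Your additional remarks about the product-space application and the irrelevance of independence between $S$ and $S'$ for the union bound are correct elaborations of details the paper leaves implicit.
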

\begin{proof}
The theorem follows by using the upper bound in equation~\eqref{eq:PBlambda} to bound $\E_\rho[L(h)]$ and the lower bound in equation~\eqref{eq:PBlambda-lower} to bound $\E_{\rho^2}[\D(h,h')]$ in Theorem~\ref{thm:MV-bound-binary}. We replace $\delta$ by $\delta/2$ in the upper and lower bound and take a union bound over them.
\end{proof}
Using PAC-Bayes-kl to bound $\E_\rho[L(h)]$ and $\E_{\rho^2}[\D(h,h')]$ in Theorem~\ref{thm:MV-bound-binary} gives a tighter bound, but the bound in Theorem~\ref{thm:disagreement} is more convenient for minimisation. The minimization procedure is provided in Appendix~\ref{app:minimization}.

\subsection{Ensemble construction}
\label{sec:validation}

\citet{TIWS17} have proposed an elegant way of constructing finite data-dependent hypothesis spaces that work well with PAC-Bayesian bounds. The idea is to generate multiple splits of a data set $S$ into pairs of subsets $S = T_h \cup S_h$, such that $T_h \cap S_h = \varnothing$. A hypothesis $h$ is then trained on $T_h$ and $\hat L(h,S_h)$ provides an unbiased estimate of its loss. The splits 
cannot depend on the data. Two examples of such splits are splits generated by cross-validation \citep{TIWS17} and splits generated by bagging in random forests, where out-of-bag (OOB) samples provide unbiased estimates of expected losses of individual trees \citep{LIS19}. It is possible to train multiple hypotheses with different parameters on each split,  as it happens in cross-validation. The resulting set of hypotheses produces an ensemble, and PAC-Bayesian bounds provide generalization bounds for a weighted majority vote of the ensemble and allow optimization of the weighting. There are two minor modifications required: the weighted empirical losses $\E_\rho[\hat L(h,S)]$ in the bounds are replaced by weighted validation losses $\E_\rho[\hat L(h,S_h)]$, and the sample size $n$ is replaced by the minimal validation set size $\nmin = \min_h |S_h|$. It is possible to use any data-independent prior, with uniform prior $\pi(h) = 1/|\cal{H}|$ being a natural choice in many cases \citep{TIWS17}.

For pairs of hypotheses $(h,h')$ we use the overlaps of their validation sets $S_h\cap S_{h'}$ to calculate an unbiased estimate of their tandem loss, $\hat L(h,h',S_h\cap S_{h'})$, which replaces $\hat L(h,h',S)$ in the bounds. The sample size $n$ is then replaced by $\nmin = \min_{h,h'} (S_h\cap S_{h'})$. 

\subsection{Comparison of the empirical bounds}
\label{sec:fast-vs-slow}

We provide a high-level comparison of the empirical first order bound ($\FO$), the new empirical second order bound based on the tandem loss ($\TND$, Theorem~\ref{thm:tandem-lambda}), and the new empirical second order bound based on disagreements ($\DIS$, Theorem~\ref{thm:disagreement}). The two key quantities in the comparison are the sample size $n$ in the denominator of the bounds and fast and slow convergence rates for the standard (first order) loss, the tandem loss, and the disagreements. \citet{TS13} have shown that if we optimize $\lambda$ for a given $\rho$, the PAC-Bayes-$\lambda$ bound in equation~\eqref{eq:PBlambda} can be written as
\[
\E_\rho[L(h)] \leq \E_\rho[\hat L(h,S)] + \sqrt{\frac{2\E_\rho[\hat L(h,S)]\lr{\KL(\rho\|\pi) + \ln(2\sqrt{n}/\delta)}}{n}} + \frac{2\lr{\KL(\rho\|\pi) + \ln(2\sqrt{n}/\delta)}}{n}.
\]
This form of the bound, introduced by \citet{McA03}, is convenient for explanation of fast and slow rates. If $\E_\rho[\hat L(h,S)]$ is large, then the middle term on the right hand side dominates the complexity and the bound decreases at the rate of $1/\sqrt{n}$, which is known as a \emph{slow rate}. If $\E_\rho[\hat L(h,S)]$ is small, then the last term dominates and the bound decreases at the rate of $1/n$, which is known as a \emph{fast rate}. 

\myparagraph{$\FO$ vs.\ $\TND$} The advantage of the $\FO$ bound is that the validation sets $S_h$ available for estimation of the first order losses $\hat L(h,S_h)$ are larger than the validation sets $S_h\cap S_{h'}$ available for estimation of the tandem losses. Therefore, the denominator $\nmin = \min_h |S_h|$ in the $\FO$ bound is larger than the denominator $\nmin = \min_{h,h'}|S_h\cap S_{h'}|$ in the $\TND$ bound. The $\TND$ disadvantage can be reduced by using data splits with large validation sets $S_h$ and small training sets $T_h$, as long as small training sets do not overly impact the quality of base classifiers $h$. Another advantage of the $\FO$ bound is that its complexity term has $\KL(\rho\|\pi)$, whereas the $\TND$ bound has $2\KL(\rho\|\pi)$. The advantage of the $\TND$ bound is that $\E_{\rho^2}[L(h,h')] \leq E_\rho[L(h)]$ and, therefore, the convergence rate of the tandem loss is typically faster than the convergence rate of the first order loss. The 
interplay of the estimation advantages and disadvantages, combined with the advantages and disadvantages of the underlying oracle bounds discussed in Section~\ref{sec:FOvsSO}, 
depends on the data and the hypothesis space.

\myparagraph{$\TND$ vs.\ $\DIS$} The advantage of the $\DIS$ bound relative to the $\TND$ bound is that in presence of a large amount of unlabeled data the disagreements $\D(h,h')$ can be tightly estimated (the denominator $m$ is large) and the estimation complexity is governed by the first order term, $\E_\rho[L(h)]$, which is "easy" to estimate, as discussed above. However, the $\DIS$ bound has two disadvantages. A minor one is its reliance on estimation of two quantities, $\E_\rho[L(h)]$ and $\E_{\rho^2}[\D(h,h')]$, which requires a union bound, e.g., replacement of $\delta$ by $\delta/2$. A more substantial one is that the disagreement term is desired to be large, and thus has a slow convergence rate. Since slow convergence rate relates to fast convergence rate as $1/\sqrt{n}$ to $1/n$, as a rule of thumb the $\DIS$ bound is expected to outperform $\TND$ only when the amount of unlabeled data is at least quadratic in the amount of labeled data, $m > n^2$.

\section{Empirical evaluation}
We studied the empirical performance of the bounds using standard random forests \citep{Bre01}
on a subset of data sets from the UCI and LibSVM repositories \citep{UCI,libsvm}. An overview of the data sets is given in Table~\ref{tab:data_sets} in the appendix. The number of points varied from 3000 to 70000 with dimensions $d<1000$. For each data set we set aside 20\% of the data for a test set $\testset$ and used the remaining data, which we call $S$, for ensemble construction and computation of the bounds. Forests with 100 trees were trained until leaves were pure, using the Gini criterion for splitting and considering $\sqrt{d}$ features in each split. We made 50 repetitions of each experiment and report the mean and standard deviation. In all our experiments $\pi$ was uniform and $\delta = 0.05$. We present two experiments: (1) a comparison of tightness of the bounds applied to uniform weighting, and (2) a comparison of weighting optimization the bounds. 
Additional experiments, where we explored the effect of using splits with increased validation and decreased training subsets, as suggested in Section~\ref{sec:fast-vs-slow}, and where we compared the $\TND$ and $\DIS$ bounds in presence of unlabeled data, are described in Appendix~\ref{app:experiments}.

The python source code for replicating the experiments is available at Github\footnote{\url{https://github.com/StephanLorenzen/MajorityVoteBounds}}.

\begin{figure}[t]
    \centering
    \includegraphics{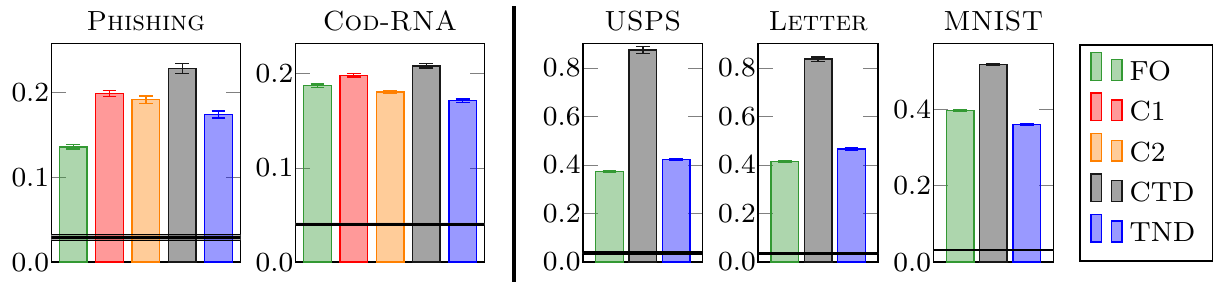}
    \caption{Test risk (black) and the bounds for a uniformly weighted random forest 
    on a subset of binary (left) and multiclass (right) datasets.
    Plots for the remaining datasets are provided in Figures~\ref{fig:uniform_all_bin} and \ref{fig:uniform_all_mul} in the appendix.
    }
    \label{fig:rf_example_bounds}
\end{figure}
\myparagraph{Uniform weighting} 
In Figure~\ref{fig:rf_example_bounds} we compare tightness of $\FO$, $\Cone$ and $\Ctwo$ (the two forms of C-bound by \citealp{GLL+15}, see Appendix~\ref{app:equivalence} for the oracle forms), the C-tandem bound ($\CTD$, Theorem~\ref{thm:C-bound}), and $\TND$ applied to uniformly weighted random forests 
on a subset of data sets. The right three plots are multiclass datasets, where $\Cone$ and $\Ctwo$ are inapplicable. 
The outcomes for the remaining datasets are reported in Figures~\ref{fig:uniform_all_bin} and \ref{fig:uniform_all_mul} in the appendix.
Since no optimization was involved, we used the PAC-Bayes-kl to bound $\E_\rho[L(h)]$, $\E_{\rho^2}[L(h,h')]$, and $\E_{\rho^2}[\D(h,h')]$ in the first and second order bounds, which is tighter than using PAC-Bayes-$\lambda$.
The $\TND$ bound was the tightest for 5 out of 16 data sets, and provided better guarantees than the C-bounds for 4 out of 7 binary data sets. In most cases, the $\FO$-bound was the tightest.


\myparagraph{Optimization of the weighting} We compared the loss on the test set $\testset$ and tightness after using the bounds for optimizing the weighting $\rho$. As already discussed, the C-bounds are not suitable for optimization (see also \citealp{LIS19}) and, therefore, excluded from the comparison. We used the PAC-Bayes-$\lambda$ form of the bounds for $\E_\rho[L(h)]$, $\E_{\rho^2}[L(h,h')]$, and $\E_{\rho^2}[\D(h,h')]$ for optimization of $\rho$ and then used the PAC-Bayes-kl form of the bounds for computing the final bound with the optimized $\rho$. Optimization details are provided in Appendix~\ref{app:minimization}.

\begin{figure}[t]
    \centering
    \begin{subfigure}{.49\linewidth}
     \centering
    \includegraphics[width=\linewidth]{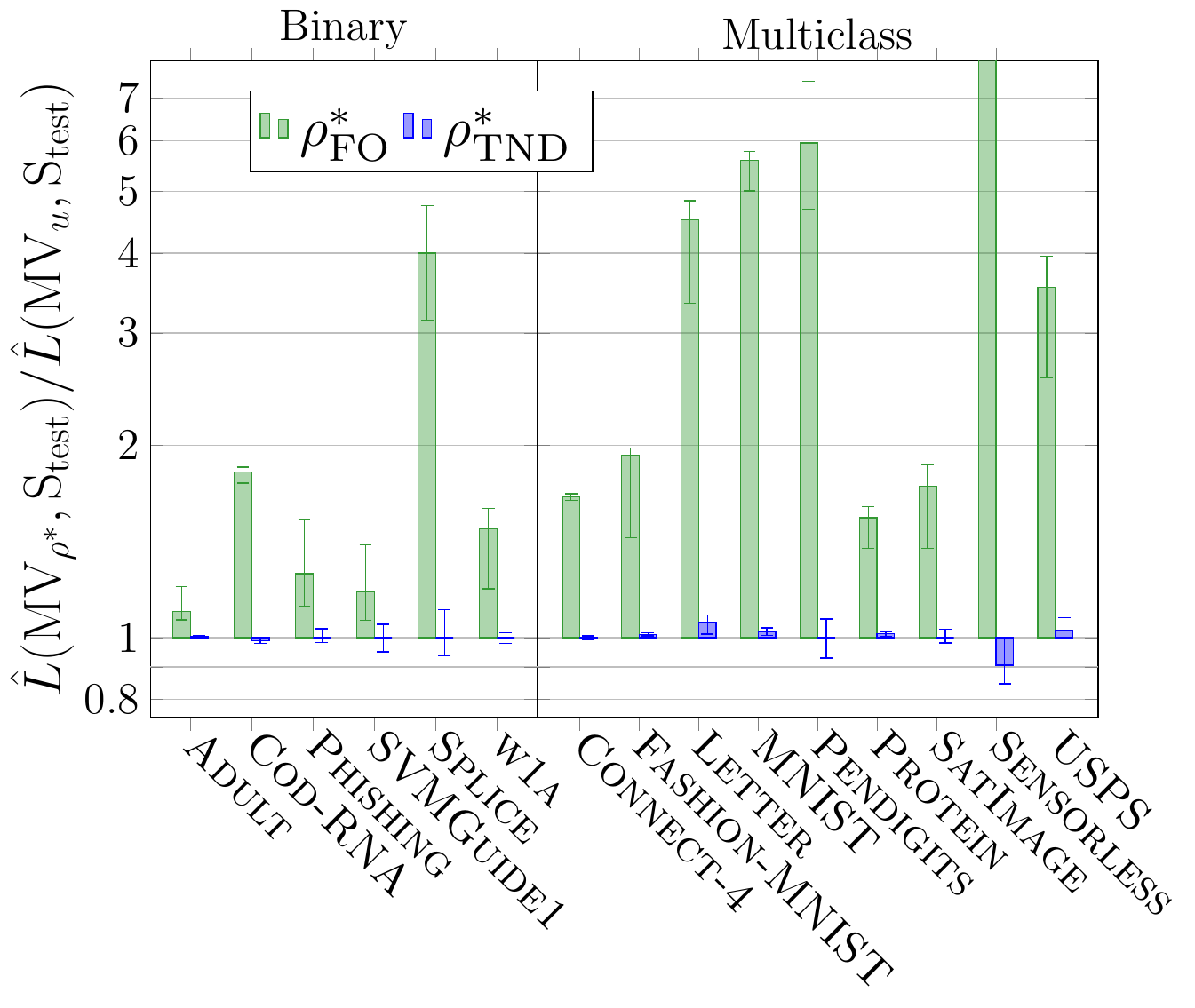}
    \caption{}
    \label{fig:opt_mvrisk}
    \end{subfigure}%
    \hfill
    \begin{subfigure}{.46\linewidth}
    \centering
    \includegraphics[width=\linewidth]{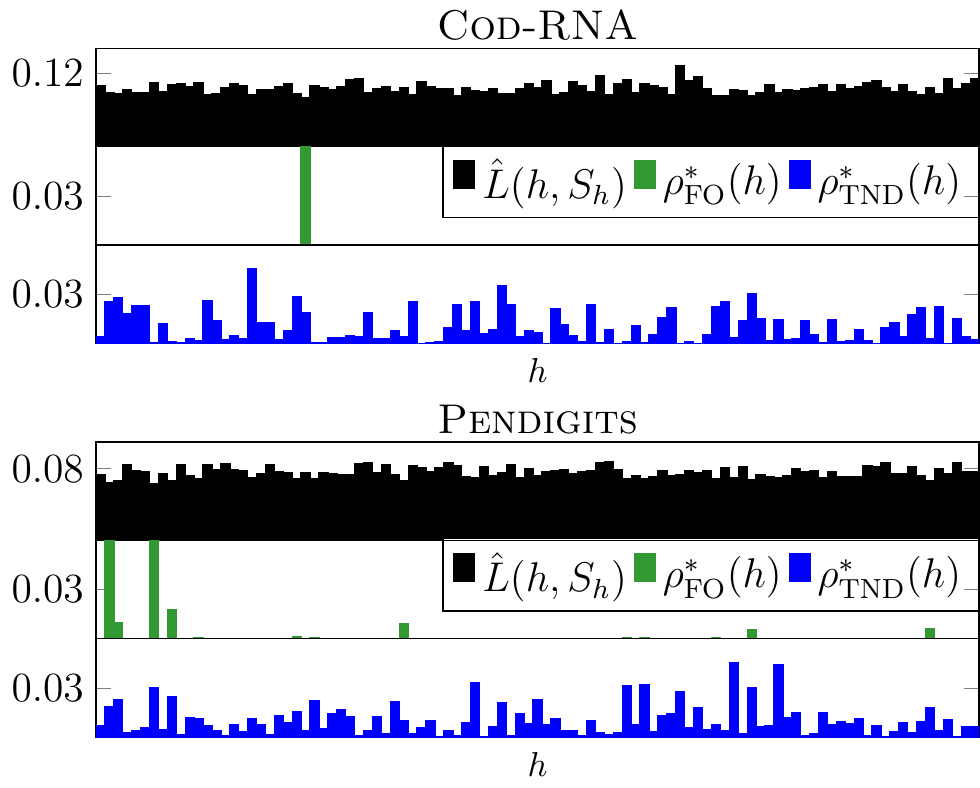}
    \caption{}
    \label{fig:example_rho}
    \end{subfigure}
    \caption{(a) The median, 25\%, and 75\% quantiles of the ratio $\hat L(\MV_{\rho^*}, \testset) / \hat L(\MV_u,\testset)$ of the test loss of majority vote with optimized 
    weighting $\rho^*$ generated by $\FO$ and $\TND$. The plot is on a logarithmic scale. Values above 1 represent degradation in performance on new data and values below 1 represent an improvement. (b) The optimized weights $\rho^*$ generated by $\FO$ and $\TND$.}
    \label{fig:opt_mvrisk_and_rho}
\end{figure}
Figure~\ref{fig:opt_mvrisk} compares the ratio of the loss of majority vote with optimized weighting to the loss of majority vote with uniform weighting on $\testset$ for $\rho^*$ found by minimization of $\FO$ and $\TND$. The numerical values are given in Table~\ref{tab:opt_mvrisk} in the appendix. 
While both bounds tighten with optimization, we observed that optimization of $\FO$ considerably weakens the performance on $\testset$ for all datasets, whereas optimization of $\TND$ did not have this effect and in some cases even improved the outcome. Figure~\ref{fig:example_rho} shows optimized distributions for two sample data sets. It is clearly seen that $\FO$ placed all the weight on a few top trees, while $\TND$ hedged the bets on multiple trees. The two figures demonstrate that the new bound correctly handled interactions between voters, as opposed to $\FO$.

\section{Discussion}

We have presented a new analysis of the weighted majority vote, which 
provides a reasonably tight generalization guarantee and can be used to guide optimization of the weights. The analysis has been applied 
to random forests, where the bound can be 
computed using out-of-bag samples 
with no need for a dedicated hold-out validation set, thus making highly efficient use of the data.
We have shown that in contrary to the commonly used first order bound, minimization of the new bound does not lead to deterioration of the test error, confirming that the analysis captures the cancellation of errors, which is the core of the majority vote.

\begin{ack}
We thank Omar Rivasplata and the anonymous reviewers for their suggestions for manuscript improvements. AM is funded by the Spanish Ministry of Science, Innovation and Universities under the projects TIN2016-77902-C3-3-P and PID2019-106758GB-C32, and by a Jose Castillejo scholarship CAS19/00279. SSL acknowledges funding by the Danish Ministry of Education and Science, Digital Pilot Hub and Skylab Digital. CI acknowledges support by the Villum Foundation through the project Deep Learning and Remote Sensing for Unlocking Global Ecosystem Resource Dynamics (DeReEco). YS acknowledges support by the Independent Research Fund Denmark, grant number 0135-00259B.
\end{ack}
\section*{Broader impact}
Ensemble classifiers,  in particular random forests, are among the most important tools in machine learning \citep{Delgado2014,Zhu15}, which are very frequently applied in practice \citep[e.g.,][]{chen2016xgboost,hoch2015ensemble,puurula2014kaggle,stallkamp:12}. Our study provides generalization guarantees for random forests and a method for tuning the weights of individual trees within a forest, which can lead to even higher accuracies. The result is of high practical relevance. 

Given that machine learning models are increasingly used to make decisions that have a strong impact on society, industry, and individuals, it is important that we have a good theoretical understanding of the employed methods and are able to provide rigorous guarantees for their performance. And here lies the strongest contribution of the line of research followed in our study, in which we derive rigorous bounds on the generalization error of random forests and other ensemble methods for multiclass classification.

\bibliography{bibliography,bibliography-Yevgeny}
\bibliographystyle{plainnat}

\newpage
\appendix

\renewcommand\thefigure{\thesection.\arabic{figure}} 
\renewcommand\thetable{\thesection.\arabic{table}} 
\renewcommand{\theequation}{\thesection.\arabic{equation}}
\renewcommand{\thetheorem}{\thesection.\arabic{theorem}}

\section{Proof of Lemmas~\ref{lem:second-monent} and \ref{lem:L2D}}
\label{app:L2D}

\begin{proof}[Proof of Lemma~\ref{lem:second-monent}]
\begin{align}
\E_D[\E_\rho[\1[h(X)\neq Y]]^2] &= \E_D[\E_\rho[\1[h(X)\neq Y]]\E_\rho[\1[h(X)\neq Y]]]\label{eq:MV-mid-bound}\\
&= \E_D[\E_{\rho^2}[\1[h(X)\neq Y]\1[h'(X)\neq Y]]]\notag\\
&= \E_D[\E_{\rho^2}[\1[h(X)\neq Y \wedge h'(X)\neq Y]]]\notag\\
&= \E_{\rho^2}[\E_D[\1[h(X)\neq Y \wedge h'(X)\neq Y]]]\notag\\
&= \E_{\rho^2}[\L(h,h')].\notag   
\end{align}
\end{proof}

\begin{proof}[Proof of Lemma~\ref{lem:L2D}]
Picking from \eqref{eq:MV-mid-bound}, we have
\begin{align*}
\E_\rho[\1[h(X)\neq Y]]\E_\rho[\1[h(X)\neq Y]] &= \E_\rho[\1[h(X)\neq Y]](1 - \E_\rho[(1 - \1[h(X)\neq Y])]\\
&= \E_\rho[\1[h(X)\neq Y]] - \E_\rho[\1[h(X)\neq Y]]\E_\rho[\1[h(X)= Y]]\\
& = \E_{\rho}[\1[h(X)\neq Y]] - \E_{\rho^2}[\1[h(X)\neq Y\wedge h'(X)= Y]]\\
&= \E_\rho[\1[h(X)\neq Y]] - \frac{1}{2}\E_{\rho^2}[\1[h(X)\neq h'(X)]].
\end{align*}
By taking expectation with respect to $D$ on both sides and applying Lemma~\ref{lem:second-monent} to the left hand side, we obtain:
\[
\E_{\rho^2}[L(h,h')] = \E_D[\E_\rho[\1[h(X)\neq Y]] - \frac{1}{2}\E_{\rho^2}[\1[h(X)\neq h'(X)]]] = \E_\rho[L(h)] - \frac{1}{2}\E_{\rho^2}[\D(h,h')].
\]
\end{proof}

\section{An alternative derivation of Theorems~\ref{thm:first-order} and \ref{thm:MV-bound} using relaxations of the indicator function}
\label{app:alternative}

\begin{figure}[ht]
\begin{center}
\includegraphics[scale=0.5]{./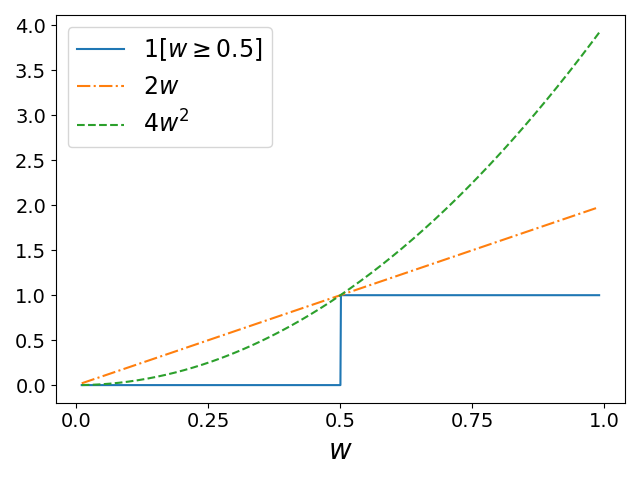}
\end{center}
\caption{\label{fig:indicator-bound} Relaxations of the indicator function.}
\end{figure}

In this section, we provide an alternative derivation of Theorems~\ref{thm:first-order} and \ref{thm:MV-bound} using relaxations of the indicator function. The alternative derivation may provide additional intuition about the method and this is how we initially have arrived to the results.

As explained in Section \ref{sec:generalsetup}, if majority vote makes an error, then at least a $\rho$-weighted half of the classifiers have made an error. Therefore, we have
$\ell(\MV_\rho(X),Y) \leq \1[\E_\rho[\1[h(X) \neq Y]] \geq 0.5]$. 

The first order bound can be derived from a first order relaxation of the indicator function. For any $w\in[0,1]$ we have $\1[w\geq0.5] \leq 2w$, see Figure~\ref{fig:indicator-bound}. Taking $w = \E_\rho[\1[h(X) \neq Y]]$ we have
\begin{align*}
L(\MV_\rho) &\leq \E_D[\1[\E_\rho[\1[h(X) \neq Y]] \geq 0.5]]\\
&\leq 2 \E_D[\E_\rho[\1[h(X)\neq Y]]] = 2\E_\rho[\E_D[\1[h(X)\neq Y]]] = 2\E_\rho[L(h)],
\end{align*}
which gives the result in Theorem~\ref{thm:first-order}.

The second order bound can be derived from a second order relaxation of the indicator function. We use the inequality $\1[w\geq0.5] \leq 4w^2$, which holds for all $w\in[0,1]$, see Figure~\ref{fig:indicator-bound}. As before, we take $w = \E_\rho[\1[h(X) \neq Y]]$. Then, we have
\[
L(\MV_\rho) \leq \E_D[\1[\E_\rho[\1[h(X) \neq Y]] \geq 0.5]] \leq 4\E_D[\E_\rho[\1[h(X)\neq Y]]^2] = 4\E_{\rho^2}[\L(h,h')],
\]
where the last equality is by Lemma~\ref{lem:second-monent}.

\section{Relation between second order Markov's and Chebyshev-Cantelli inequalities}
\label{app:Chebyshev-Cantelli}

In this section we show that second order Markov's inequality is a relaxation of Chebyshev-Cantelli inequality. In order to emphasize the relation between the proofs of Theorems~\ref{thm:first-order} and \ref{thm:MV-bound} in the body and in the previous section, we provide a direct derivation of Markov's and second order Markov's inequalities using relaxations of the indicator function. For any non-negative random variable $X$ and $\varepsilon > 0$ we have:
\begin{align*}
    \1[X\geq \varepsilon] &\leq \frac{1}{\varepsilon}X,\\
    \1[X \geq \varepsilon] &\leq \frac{1}{\varepsilon^2} X^2.
\end{align*}
We use these inequalities to recover the well-known Markov's inequality and prove the second order Markov's inequality.

\begin{theorem}[Markov's Inequality]
For a non-negative random variable $X$ and $\varepsilon > 0$
\[
\P[X \geq \varepsilon] \leq \frac{\E[X]}{\varepsilon}.
\]
\end{theorem}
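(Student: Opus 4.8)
The plan is to read off the result directly from the first-order relaxation of the indicator function stated immediately above the theorem, namely the pointwise bound $\1[X \geq \varepsilon] \leq \frac{1}{\varepsilon}X$, which holds for every realization of the non-negative random variable $X$ since $\varepsilon > 0$. Everything then follows by taking expectations.

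First I would verify that the pointwise inequality $\1[w \geq \varepsilon] \leq w/\varepsilon$ holds for all $w \geq 0$: if $w \geq \varepsilon$ the left-hand side equals $1$ and the right-hand side is at least $1$; if $w < \varepsilon$ the left-hand side equals $0$ and the right-hand side is non-negative. Applying this with $w = X$ and taking expectations on both sides, linearity and monotonicity of expectation give $\E[\1[X \geq \varepsilon]] \leq \frac{1}{\varepsilon}\E[X]$. Since the expectation of the indicator of an event is the probability of that event, the left-hand side is exactly $\P[X \geq \varepsilon]$, which yields $\P[X \geq \varepsilon] \leq \E[X]/\varepsilon$, the claimed bound.

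There is essentially no obstacle here. The only subtlety worth a remark is integrability: no hypothesis $\E[X] < \infty$ is needed, because if $\E[X] = +\infty$ the inequality is trivially true, and otherwise the expectation step above is justified. I would also note that the identical template — replacing the first-order relaxation by the second-order relaxation $\1[X \geq \varepsilon] \leq X^2/\varepsilon^2$ and again taking expectations — immediately produces the second order Markov's inequality that underlies Theorem~\ref{thm:MV-bound}, which is presumably the point of presenting the proof in this form.
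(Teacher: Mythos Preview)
Your proof is correct and follows exactly the paper's approach: apply the pointwise relaxation $\1[X \geq \varepsilon] \leq X/\varepsilon$ and take expectations. The paper's version is simply the one-line display $\P[X \geq \varepsilon] = \E[\1[X \geq \varepsilon]] \leq \E[X]/\varepsilon$, omitting the verification of the pointwise bound and the integrability remark you added.
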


\begin{proof}
\[
\P[X \geq \varepsilon] = \E[\1[X \geq \varepsilon]] \leq \frac{\E[X]}{\varepsilon}.
\]
\end{proof}

\begin{theorem}[Second order Markov's inequality]
For a non-negative random variable $X$ and $\varepsilon > 0$
\begin{equation}
\label{eq:RCC}
\P[X \geq \varepsilon] \leq \frac{\E[X^2]}{\varepsilon^2}.
\end{equation}
\end{theorem}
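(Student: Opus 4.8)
The plan is to mirror the one-line proof of Markov's inequality given just above, using the second-order relaxation of the indicator that was already recorded. Concretely, I would start from the pointwise bound $\1[X \geq \varepsilon] \leq \varepsilon^{-2} X^2$, which holds for every outcome: when $X \geq \varepsilon > 0$ the right-hand side is at least $1$, and otherwise the left-hand side is $0$ while the right-hand side is nonnegative. Taking expectations of both sides and using monotonicity of expectation together with $\P[X \geq \varepsilon] = \E[\1[X \geq \varepsilon]]$ yields $\P[X \geq \varepsilon] \leq \varepsilon^{-2}\E[X^2]$, which is exactly \eqref{eq:RCC}.

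Alternatively — and this is the route that best explains the name — I would observe that since $X$ is non-negative and $\varepsilon > 0$, the events $\{X \geq \varepsilon\}$ and $\{X^2 \geq \varepsilon^2\}$ coincide, so $\P[X \geq \varepsilon] = \P[X^2 \geq \varepsilon^2]$; then applying the ordinary Markov inequality (the preceding theorem) to the non-negative random variable $X^2$ with threshold $\varepsilon^2$ gives $\P[X^2 \geq \varepsilon^2] \leq \E[X^2]/\varepsilon^2$. Either derivation is complete in two or three lines; I would present the first one to keep the exposition parallel to the indicator-relaxation viewpoint developed in this appendix.

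There is essentially no obstacle here: the only point that deserves a word of care is the equivalence of the two events, which needs $X \geq 0$ (square is monotone on $[0,\infty)$) and $\varepsilon > 0$, both of which are in the hypotheses; beyond that the argument is entirely routine.
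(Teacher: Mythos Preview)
Your proposal is correct and matches the paper's proof essentially verbatim: the paper writes the one-line chain $\P[X \geq \varepsilon] = \E[\1[X \geq \varepsilon]] \leq \E[X^2]/\varepsilon^2$, invoking the pointwise bound $\1[X \geq \varepsilon] \leq \varepsilon^{-2}X^2$ that it stated just before. Your alternative route via $\{X \geq \varepsilon\} = \{X^2 \geq \varepsilon^2\}$ and ordinary Markov is also the one the paper uses informally in the body (Section~3), so both of your derivations appear in the paper.
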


\begin{proof}
\[
\P[X \geq \varepsilon] = \E[\1[X \geq \varepsilon]] \leq \frac{\E[X^2]}{\varepsilon^2}.
\]
\end{proof}
We also cite Chebyshev-Cantelli inequality without a proof. For a proof see, for example, \citet{DGL96}.
\begin{theorem}[Chebyshev-Cantelli inequality]\label{thm:Chebyshev-Cantelli}
For a real-valued random variable $X$ and $\varepsilon > 0$
\begin{equation}
\label{eq:CC}
\P[X - \E[X] \geq \varepsilon] \leq \frac{\V[X]}{\varepsilon^2 + \V[X]},
\end{equation}
where $\V[X] = \E[X^2] - \E[X]^2$ is the variance of $X$.
\end{theorem}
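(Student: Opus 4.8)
The plan is to deduce the Chebyshev--Cantelli inequality from the second order Markov's inequality~\eqref{eq:RCC} by a shift-and-square argument. First I would reduce to the centered case: set $Y = X - \E[X]$, so that $\E[Y] = 0$ and $\V[Y] = \V[X]$, and write $\sigma^2 = \V[X]$. The target becomes $\P[Y \geq \varepsilon] \leq \sigma^2/(\varepsilon^2 + \sigma^2)$.

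Next, for an arbitrary parameter $t \geq 0$ I would use the chain of inclusions $\lrc{Y \geq \varepsilon} \subseteq \lrc{Y + t \geq \varepsilon + t} \subseteq \lrc{(Y+t)^2 \geq (\varepsilon+t)^2}$, where the second inclusion is valid because $\varepsilon + t > 0$ (on the event $Y+t \geq \varepsilon+t$ both sides are then positive and may be squared). Applying~\eqref{eq:RCC} to the non-negative random variable $\lvert Y+t\rvert$ with threshold $\varepsilon + t$ yields
\[
\P[Y \geq \varepsilon] \;\leq\; \frac{\E[(Y+t)^2]}{(\varepsilon+t)^2} \;=\; \frac{\sigma^2 + t^2}{(\varepsilon+t)^2},
\]
where the last equality uses $\E[(Y+t)^2] = \E[Y^2] + 2t\E[Y] + t^2 = \sigma^2 + t^2$ since $\E[Y] = 0$.

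Finally I would optimize the free parameter. Differentiating $t \mapsto (\sigma^2 + t^2)/(\varepsilon+t)^2$ shows that over $t \geq 0$ the minimum is attained at $t = \sigma^2/\varepsilon \geq 0$, and substituting this value collapses the right-hand side to exactly $\sigma^2/(\varepsilon^2 + \sigma^2)$, which is the claimed bound. The degenerate case $\sigma^2 = 0$, in which $X$ equals $\E[X]$ almost surely and the left-hand side is $0$, is covered directly (or simply by taking $t = 0$ in the displayed inequality).

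There is no genuine obstacle here: the only step that is not a routine calculation is the introduction of the shift $t$, which is exactly what converts the one-sided tail event $\lrc{Y \geq \varepsilon}$ into a symmetric event $\lrc{(Y+t)^2 \geq (\varepsilon+t)^2}$ to which the second order Markov's inequality of~\eqref{eq:RCC} applies; after that, everything reduces to a one-variable minimization.
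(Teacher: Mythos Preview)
Your proof is correct: the shift-and-square reduction to the second order Markov inequality, followed by optimizing $t = \sigma^2/\varepsilon$, is the standard argument and all steps are valid. Note, however, that the paper does not supply its own proof of this theorem; it explicitly states the result ``without a proof'' and refers the reader to \citet{DGL96}, so there is no in-paper argument to compare against.
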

Finally, we show that second order Markov's inequality is a relaxation of Chebyshev-Cantelli inequality.
\begin{lemma}
The second-order Markov's inequality \eqref{eq:RCC} is a relaxation of Chebyshev-Cantelli inequality \eqref{eq:CC}.
\end{lemma}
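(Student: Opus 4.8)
The plan is to show that applying the second order Markov's inequality \eqref{eq:RCC} to a suitably shifted random variable recovers a bound that is never tighter than Chebyshev-Cantelli \eqref{eq:CC}, i.e.\ that the right-hand side of \eqref{eq:RCC} upper-bounds the right-hand side of \eqref{eq:CC} whenever both are applied to the same tail event. Concretely, given a real-valued random variable $X$ with mean $\mu = \E[X]$ and variance $\sigma^2 = \V[X]$, and given $\varepsilon > 0$, I would introduce the centered variable $Z = X - \mu + c$ for a shift constant $c$ to be chosen, so that the event $\{X - \mu \geq \varepsilon\}$ coincides with $\{Z \geq \varepsilon + c\}$. Choosing $c$ so that $Z$ is non-negative on the relevant range and optimizing, the natural choice turns out to be $c = \sigma^2/\varepsilon$, which is exactly the slack that makes Chebyshev-Cantelli tight; with this choice $Z \geq \varepsilon + \sigma^2/\varepsilon$ has, by \eqref{eq:RCC}, probability at most $\E[Z^2]/(\varepsilon + \sigma^2/\varepsilon)^2$.

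The key computation is then to evaluate $\E[Z^2] = \E[(X-\mu)^2] + 2c\,\E[X-\mu] + c^2 = \sigma^2 + c^2 = \sigma^2 + \sigma^4/\varepsilon^2$, using $\E[X-\mu]=0$. Plugging in, the bound from second order Markov's becomes
\[
\frac{\sigma^2 + \sigma^4/\varepsilon^2}{(\varepsilon + \sigma^2/\varepsilon)^2} = \frac{\sigma^2(1 + \sigma^2/\varepsilon^2)}{\varepsilon^2(1 + \sigma^2/\varepsilon^2)^2} = \frac{\sigma^2}{\varepsilon^2(1 + \sigma^2/\varepsilon^2)} = \frac{\sigma^2}{\varepsilon^2 + \sigma^2},
\]
which is exactly the Chebyshev-Cantelli right-hand side. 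So with the optimal shift the two inequalities give the identical bound on $\P[X - \mu \geq \varepsilon]$; for any other non-negative shift $c$ the second order Markov bound $(\sigma^2 + c^2)/(\varepsilon + c)^2$ is no smaller (a one-line calculus check that $c = \sigma^2/\varepsilon$ minimizes this ratio over $c \geq 0$). This establishes that \eqref{eq:RCC} is a relaxation of \eqref{eq:CC}: any tail bound obtainable from the former is obtainable from the latter, but not conversely, since Chebyshev-Cantelli exploits the centering for free whereas second order Markov must pay for it unless the shift happens to be chosen optimally.

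The main obstacle is mostly presentational rather than mathematical: one must be careful that the random variable to which \eqref{eq:RCC} is applied is genuinely non-negative (second order Markov's inequality as stated requires $X \geq 0$), which is why the shift $c$ must be large enough. When applying this to the majority-vote setting of Theorem~\ref{thm:MV-bound}, the relevant variable is $Z = \E_\rho[\1[h(X)\neq Y]] \in [0,1]$, which is already non-negative, so no shift is needed there and \eqref{eq:RCC} is applied directly with $\varepsilon = 1/2$ — the point of this appendix is simply to record that one could instead have invoked Chebyshev-Cantelli and obtained something at least as good, namely the C-tandem bound of Theorem~\ref{thm:C-bound}. I would close by remarking that the price paid by Theorem~\ref{thm:MV-bound} relative to the C-bound is precisely the term $\E_\rho[L(h)]^2$ appearing in the numerator and denominator of Theorem~\ref{thm:C-bound}, which the direct second order Markov argument discards, and that this is the source of the gap discussed in Section~\ref{sec:FOvsSO}.
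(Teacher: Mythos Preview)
Your argument is correct but takes a different route from the paper. The paper proves the lemma by direct algebraic comparison: for non-negative $X$ with $\E[X] < \varepsilon$ it writes the Chebyshev--Cantelli bound on $\P[X \geq \varepsilon]$ as $\frac{\E[X^2]-\E[X]^2}{\varepsilon^2 - 2\varepsilon\E[X] + \E[X^2]}$ and verifies by cross-multiplication that this is at most $\E[X^2]/\varepsilon^2$, the inequality reducing to $0 \leq (\E[X]\varepsilon - \E[X^2])^2$. Your route is more structural: you observe that Chebyshev--Cantelli is itself second-order Markov applied to the shifted variable $X - \mu + c$ with the \emph{optimal} shift $c = \sigma^2/\varepsilon$, so that any other shift yields something no tighter. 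This is a nice perspective, and it recovers the paper's equality condition as the case where the optimal shift happens to coincide with the one implicit in direct second-order Markov.

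Two small remarks. First, you flag non-negativity of the shifted variable as an obstacle, but in fact $\P[Z \geq t] \leq \E[Z^2]/t^2$ holds for any real-valued $Z$ and $t > 0$ (since $\{Z \geq t\} \subseteq \{Z^2 \geq t^2\}$ and $Z^2 \geq 0$), so no care is needed there. Second, to actually close the lemma you should make explicit the step you only hint at: applying second-order Markov directly to the non-negative $X$ at threshold $\varepsilon_0$ corresponds, in your parametrization of $\P[X - \mu \geq \varepsilon_0 - \mu]$, to the particular shift $c = \mu$; once that identification is stated, your optimality of $c = \sigma^2/(\varepsilon_0-\mu)$ over all $c$ immediately delivers the comparison the lemma asserts.
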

\begin{proof}
We show that inequality \eqref{eq:CC} is always at least as tight as inequality \eqref{eq:RCC}. The inequality \eqref{eq:RCC} is only non-trivial when $\E[X] < \varepsilon$, so for the comparison we can assume that $\E[X] < \varepsilon$. By \eqref{eq:CC} we then have:
\[
\P[X\geq\varepsilon] = \P[X - \E[X] \geq \varepsilon - \E[X]] \leq \frac{\V[X]}{(\varepsilon - \E[X])^2+\V[X]} = \frac{\E[X^2] - \E[X]^2}{\varepsilon^2 - 2\varepsilon\E[X]+\E[X^2]}
\]
Thus, we need to compare
\[
\frac{\E[X^2] - \E[X]^2}{\varepsilon^2 - 2 \varepsilon\E[X] + \E[X^2]} \text{~~~~~vs.~~~~~} \frac{\E[X^2]}{\varepsilon^2}.
\]
This is equivalent to the following row of comparisons:
\begin{align*}
(\E[X^2] - \E[X]^2)\varepsilon^2 &\text{~~~~~vs.~~~~~} \E[X^2] (\varepsilon^2 - 2 \varepsilon \E[X] + \E[X^2])\\
- \E[X]^2\varepsilon^2 &\text{~~~~~vs.~~~~~} \E[X^2] (- 2 \varepsilon \E[X] + \E[X^2])\\
0 &\text{~~~~~vs.~~~~~} \E[X]^2\varepsilon^2 - 2 \varepsilon \E[X] \E[X^2] + \E[X^2]^2\\
0&\text{~~~~~$\leq$~~~~~} (\E[X] \varepsilon - \E[X^2])^2,
\end{align*}
which completes the proof.
\end{proof}

\section{A proof of Theorem~\ref{thm:C-bound}}
\label{app:C-bound-proof}

We provide a direct proof of Theorem~\ref{thm:C-bound} using Chebyshev-Cantelli inequality.

\begin{proof}
We apply Chebyshev-Cantelli inequality to $\E_\rho[\1(h(X)\neq Y)]$:
\begin{align*}
L(\MV_\rho) &\leq \P[\E_\rho[\1(h(X)\neq Y)] \geq \frac{1}{2}]\\
&= \P[\E_\rho[\1(h(X)\neq Y)] - \E_\rho[L(h)] \geq \frac{1}{2} - \E_\rho[L(h)]]\\
&\leq \frac{\E_{\rho^2}[L(h,h')] - \E_\rho[L(h)]^2}{\lr{\frac{1}{2} - \E_\rho[L(h)]}^2 + \E_{\rho^2}[L(h,h')] - \E_\rho[L(h)]^2}\\
&= \frac{\E_{\rho^2}[L(h,h')] - \E_\rho[L(h)]^2}{\E_{\rho^2}[L(h,h')] - \E_\rho[L(h)] + \frac{1}{4}}.
\end{align*}
\end{proof}

\section{Equivalence of Theorem~\ref{thm:C-bound} to prior forms of the oracle C-bound}
\label{app:equivalence}

In this section we show that the C-tandem oracle bound in Theorem~\ref{thm:C-bound} is equivalent to prior forms of the oracle C-bound.

\subsection{Equivalence to Corollary 1 of \citet{LMRR17}}

\citeauthor{LMRR17}\ write their oracle C-bound in terms of an $\omega$-margin, denoted  by $M_{\rho,\omega}$(X,Y), which is defined as $M_{\rho,\omega} (X,Y)= \E_{\rho}[\1(h(X)=Y)] - \omega$, or, equivalently, $M_{\rho,\omega} (X,Y)
= (1-\omega) - \E_{\rho}[\1(h(X)\neq Y)]$. By simple algebraic manipulations we have the following identities, which show the equivalence to Theorem~\ref{thm:C-bound}:

\begin{align*}
\underbrace{\frac{\E_{\rho^2}[L(h,h')] - \E_\rho[L(h)]^2}{\E_{\rho^2}[L(h,h')] - \E_\rho[L(h)] + \frac{1}{4}}}_{\text{C-tandem oracle}}
=1-\frac{\frac{1}{4}-\E_\rho[L(h)] + \E_\rho[L(h)]^2}{\frac{1}{4} - \E_\rho[L(h)] + \E_{\rho^2}[L(h,h')]}
= \underbrace{1-\frac{\big(\E_D[M_{\rho,\frac{1}{2}}(X,Y)]\big)^2}{\E_D[(M_{\rho,\frac{1}{2}}(X,Y))^2]}}_{\text{Oracle C-bound of \citeauthor{LMRR17}}}.
\end{align*}

\subsection{Equivalence to Theorem 11 of \citet{GLL+15} in binary classification}

In binary classification we can apply Lemma~\ref{lem:L2D} and simple algebraic manipulations to obtain the following identities, which demonstrate equivalence of Theorem~\ref{thm:C-bound} and Theorem 11 of \citeauthor{GLL+15}:
\begin{align*}
\underbrace{\frac{\E_{\rho^2}[L(h,h')] - \E_\rho[L(h)]^2}{\E_{\rho^2}[L(h,h')] - \E_\rho[L(h)] + \frac{1}{4}}}_{\text{C-tandem oracle}} 
&= \frac{4\E_\rho[L(h)] - 4 (\E_\rho[L(h)])^2 - 2 \E_{\rho^2}[\D(h,h')]}{1 - 2\E_{\rho^2}[\D(h,h')]}\\
&= \underbrace{1 - \frac{\lr{1 - 2 \E_\rho[L(h)]}^2}{1 - 2 \E_{\rho^2}[\D(h,h')]}}_{\text{C1 oracle}}\\
&= \underbrace{1 - \frac{\lr{1 - (2 \E_{\rho^2}[L(h,h')] + \E_{\rho^2}[\D(h,h')])}^2}{1 - 2 \E_{\rho^2}[\D(h,h')]}}_{\text{C2 oracle}}.
\end{align*}
The second line is the oracle form of C1 bound of \citeauthor{GLL+15} and the last line is the oracle form of their C2 bound. 

We note that while all forms of the oracle C-bound are equivalent, their translation into empirical bounds might have different tightness due to varying difficulty of estimation of the oracle quantities $\E_\rho[L(h)]$, $\E_{\rho^2}[L(h,h')]$, and $\E_{\rho^2}[\D(h,h')]$, as discussed in Section~\ref{sec:fast-vs-slow}.

\section{A proof of Theorem~\ref{thm:PBlambda}}
\label{app:PBlambdaLower}

We provide a proof of the lower bound \eqref{eq:PBlambda-lower} in Theorem~\ref{thm:PBlambda}. The upper bound \eqref{eq:PBlambda} has been shown by \citet{TIWS17}. The proof of the lower bound follows the same steps as the proof of the upper bound.
\begin{proof}
We use the following version of refined Pinsker's inequality \citep[Lemma 8.4]{Mar96,Mar97,Sam00,BLM13}: for $p > q$
\begin{equation}
\label{eq:kl}
\kl(p\|q) \geq (p-q)^2/(2p).
\end{equation}
By application of inequality \eqref{eq:kl}, inequality \eqref{eq:PBkl} can be relaxed to
\begin{equation}
\E_\rho\lrs{\hat L(h,S)} - \E_\rho\lrs{L(h)} \leq \sqrt{2 \E_\rho\lrs{\hat L(h,S)} \frac{\KL(\rho\|\pi) + \ln \frac{2 \sqrt n}{\delta}}{n}}.
\label{eq:PBsqrt}
\end{equation}
By using the inequality $\sqrt{xy} \leq \frac{1}{2}\lr{\gamma x + \frac{y}{\gamma}}$ for all $\gamma > 0$, we have that with probability at least $1-\delta$ for all $\rho$ and $\gamma > 0$
\begin{equation}
\E_\rho\lrs{\hat L(h,S)} - \E_\rho\lrs{L(h)} \leq \frac{\gamma}{2}\E_\rho\lrs{\hat L(h,S)} + \frac{\KL(\rho\|\pi) + \ln \frac{2 \sqrt n}{\delta}}{\gamma n}.
\end{equation}
By changing sides
\[
\E_\rho\lrs{L(h)} \geq \lr{1 - \frac{\gamma}{2}}  \E_\rho\lrs{\hat L(h,S)} - \frac{\KL(\rho\|\pi) + \ln \frac{2 \sqrt n}{\delta}}{\gamma n}.
\]
\end{proof}

\section{Positive semi-definiteness of the matrix of empirical tandem losses}\label{app:psd}

In Lemma~\ref{lem:psd} below we show that if the empirical tandem losses are evaluated on the same set $S$, then the matrix of empirical tandem losses $\hatLtnd$ with entries $(\hatLtnd)_{h,h'} = \hat L(h,h',S)$ is positive semi-definite. This implies that for a fixed $\lambda$ the bound in Theorem~\ref{thm:tandem-lambda} is convex in $\rho$, because in this case $\E_\rho^2[\hat L(h,h',S)] = \rho^T \hatLtnd \rho$ is convex in $\rho$ and $\KL(\rho\|\pi)$ is always convex in $\rho$. (We note, however, that the bound is not necessarily jointly convex in $\rho$ and $\lambda$ and, therefore, alternating minimization of the bound may still converge to a local minimum. While \citet{TIWS17} derive conditions under which the PAC-Bayes-$\lambda$ bound for the first order loss is quasiconvex, such analysis of the bound for the second order loss would be more complicated.) In Section~\ref{sec:non-psd} we then provide an example showing that if the tandem losses are evaluated on different sets, as it happens in our case, where the entries are $(\hatLtnd)_{h,h'} = \hat L(h,h',S_h\cap S_{h'})$, then the matrix of tandem losses is not necessarily positive semi-definite. Therefore, in our case minimization of the bound is only expected to converge to a local minimum.

\begin{lemma}\label{lem:psd}
Given $\numhyp$ hypotheses and data $S=\{(X_1,Y_1),\dots,(X_n,Y_n)\}$,
the $\numhyp\times\numhyp$ matrix of empirical tandem losses $\hatLtnd$ with 
entries  $(\hatLtnd)_{h,h'}=\hat \L(h,h',S)$ is positive semi-definite.
\end{lemma}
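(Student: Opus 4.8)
The plan is to exhibit $\hatLtnd$ as a Gram matrix, which immediately forces positive semi-definiteness. The key observation is that the empirical tandem loss factorizes: for hypotheses $h$ and $h'$,
\[
\hat \L(h,h',S) = \frac{1}{n}\sum_{i=1}^n \1[h(X_i)\neq Y_i]\,\1[h'(X_i)\neq Y_i],
\]
since $\1[h(X_i)\neq Y_i \wedge h'(X_i)\neq Y_i] = \1[h(X_i)\neq Y_i]\cdot\1[h'(X_i)\neq Y_i]$. This is exactly an inner product of two vectors indexed by the sample points.

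Concretely, I would associate to each hypothesis $h$ the vector $v_h \in \mathbb{R}^n$ with $i$-th coordinate $(v_h)_i = \1[h(X_i)\neq Y_i]$. Then $(\hatLtnd)_{h,h'} = \frac{1}{n}\langle v_h, v_{h'}\rangle$. Stacking the $v_h$ as rows (or columns) of an $\numhyp\times n$ matrix $V$, we get $\hatLtnd = \frac{1}{n} V V^T$, which is positive semi-definite because for any $u\in\mathbb{R}^{\numhyp}$ we have $u^T \hatLtnd u = \frac{1}{n}\,u^T V V^T u = \frac{1}{n}\|V^T u\|_2^2 \geq 0$. That completes the proof.

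There is essentially no obstacle here — the only thing to be careful about is the factorization step, namely that on a fixed sample the "and" of the two error events is the product of the two error indicators (both are $0/1$-valued on the same $X_i$), so the bilinear structure is exact rather than an inequality. This is precisely what fails when the tandem losses are evaluated on different subsets $S_h\cap S_{h'}$, since then the normalization and the index set vary with the pair $(h,h')$ and the expression is no longer of Gram form; this is why the lemma is stated for a common sample $S$ and why the counterexample in Section~\ref{sec:non-psd} is needed for the data-dependent splits actually used in the paper.
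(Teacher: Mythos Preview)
Your proof is correct and essentially identical to the paper's: the paper defines, for each sample point $i$, the column vector $\hat\ell_i\in\mathbb{R}^{\numhyp}$ with $(\hat\ell_i)_h=\1[h(X_i)\neq Y_i]$ and writes $\hatLtnd=\frac{1}{n}\sum_{i=1}^n \hat\ell_i\hat\ell_i^T$, which is just the column-wise expansion of your $\frac{1}{n}VV^T$. Your remark about why the argument breaks for the per-pair subsets $S_h\cap S_{h'}$ is also exactly the point the paper makes in Section~\ref{sec:non-psd}.
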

\begin{proof}
Define a vector of empirical losses by hypotheses in $\cal{H}$ on a sample $(X_i,Y_i)$ by
\[
\hat \ell_i = 
\begin{pmatrix} \1[h_1(X_i)\neq Y_i]  \\ \vdots \\ \1[h_\numhyp(X_i)\neq Y_i]  \end{pmatrix}.
\]
Then the $(h,h')$ entry of the matrix $\hat \ell_i \hat \ell_i^T$ is $(\hat \ell_i \hat \ell_i^T)_{h,h'} = \1[h(X_i) \neq Y_i]\1[h'(X_i)\neq Y_i]$. Thus, the matrix of empirical tandem losses can 
be written as a mean of outer products
\begin{equation*}
 \hatLtnd= \frac{1}{n}\sum_{i=1}^n \hat \ell_i \hat \ell_i^T
\end{equation*}
and is, therefore, positive semi-definite.
\end{proof}

\subsection{Non positive semi-definite example}
\label{sec:non-psd}

If the empirical tandem losses are estimated on different subsets of the data rather than a common set $S$, as in the case of out-of-bag samples, where we take $\hat L(h, h', S_h \cap S_{h'})$, the resulting matrix of empirical tandem losses is not necessarily positive semi-definite.
Consider the following example with 2 points, 3 hypotheses, and the following losses:
\begin{center}
\begin{tabular}{c|cc}
     & $X_1$ & $X_2$ \\
     \hline
    $h_1$ & 1 & 0\\
    $h_2$ & 0 & 1\\
    $h_3$ & 0 & 1
\end{tabular}
\end{center}
If we compute the tandem loss for $h_1$ and $h_2$ on the first point and the tandem loss for $h_1$ and $h_3$ and for $h_2$ and $h_3$ on the second point, and the tandem losses of hypotheses with themselves on all the points, then we have
\[
\hat L(h,h') = \lr{\begin{array}{ccc}
    0.5 & 0 & 0 \\
    0 & 0.5 & 1 \\
    0 & 1 & 0.5
\end{array}}.
\]
This matrix is not positive semi-definite, it has eigenvalues $-0.5, 0.5$, and $1.5$.

\section{Gradient-based minimization of the bounds}
\label{app:minimization}

This section gives details on the optimization of the bounds in Theorems~\ref{thm:tandem-lambda} and \ref{thm:disagreement}.
First, we consider the bound in Theorem~\ref{thm:tandem-lambda} and provide a closed form solution for the parameter $\lambda$ given $\rho$ as well as the gradient of the bound w.r.t.{} $\rho$ for fixed $\lambda$.
Then we give the closed form solutions for the parameters $\lambda$ and $\gamma$ given $\rho$ and  the gradient  w.r.t.{} $\rho$ for fixed $\lambda$ and $\gamma$ for the bound in Theorem~\ref{thm:disagreement}.
After that, we describe the alternating minimization procedure we applied for optimization in our experiments. 
\subsection{Minimization of the bound in Theorem~\ref{thm:tandem-lambda}}

\paragraph{Optimal $\lambda$ given $\rho$}

Given $\rho$, the optimal $\lambda$ in Theorem~\ref{thm:tandem-lambda} can be computed  following \citet{TS13} and \citet{TIWS17}, because the optimization problem is the same:
\[
\lambda = \frac{2}{\sqrt{\frac{2n\E_{\rho^2}[\hat L(h,h',S)]}{2\KL(\rho\|\pi)+\ln\frac{2\sqrt n}{\delta}} + 1} + 1}.
\]

\paragraph{Gradient with respect to $\rho$ given $\lambda$}
Next we calculate the  gradient for minimizing the bound in Theorem~\ref{thm:tandem-lambda} with respect to $\rho$ under fixed $\lambda$. The minimization is equivalent to minimizing $f(\rho) = \E_{\rho^2}[\hat L(h,h',S)] + \frac{2}{\lambda n}\KL(\rho\|\pi)$ under the constraint that $\rho$ is a probability distribution. Let $(\nabla f)_h$ for $h\in\cal{H}$  denote the component of the gradient corresponding to hypothesis $h$. We also use $\hatLtnd$ to denote the matrix of empirical tandem losses and $\ln \frac{\rho}{\pi}$ to denote the vector with entry corresponding to hypothesis $h$ being $\ln \frac{\rho(h)}{\pi(h)}$. We have:
\begin{align*}
(\nabla f)_h &= 2\sum_{h'}\rho(h') \hat L(h,h',S) + \frac{2}{\lambda n}\lr{1 + \ln \frac{\rho(h)}{\pi(h)}},\\
\nabla f &= 2 \lr{\hatLtnd \rho + \frac{1}{\lambda n} \lr{1+\ln \frac{\rho}{\pi}}}.
\end{align*}

\subsection{Minimization of the bound in Theorem~\ref{thm:disagreement}}

\paragraph{Optimal $\lambda$ and $\gamma$ given $\rho$} The optimal $\lambda$ can be computed as above, because the optimization problem is the same. The only difference is that we have $\delta/2$ instead of $\delta$:
\[
\lambda = \frac{2}{\sqrt{\frac{2n\E_\rho[\hat L(h,S)]}{\KL(\rho\|\pi)+\ln\frac{4\sqrt n}{\delta}} + 1} + 1}.
\]
Minimization of the bound in Theorem~\ref{thm:disagreement} with fixed $\rho$ with respect to $\gamma$ is equivalent to minimizing $\frac{\gamma}{2}a + \frac{b}{\gamma}$ with $a = \E_{\rho^2}[\hat \D(h,h',S')]$ and $b = \frac{2\KL(\rho\|\pi) + \ln(4\sqrt m / \delta)}{m}$. The minimum is achieved by $\gamma=\sqrt{\frac{2b}{a}}$:
\[
\gamma = \sqrt{\frac{4\KL(\rho\|\pi) + \ln(16 m/\delta^2)}{m\E_{\rho^2}[\hat D(h,h',S')]}}.
\]

\paragraph{Gradient with respect to $\rho$}

Minimization of the bound with respect to $\rho$ for fixed $\lambda$ and $\gamma$ is equivalent to constrained minimization of $f(\rho) = 2a\E_\rho[\hat L(h,S)] - b \E_{\rho^2}[\hat \D(h,h',S')] + 2c\KL(\rho\|\pi)$, where $a = \frac{1}{1-\lambda/2}$, $b = 1-\gamma/2$, and $c = \frac{1}{\lambda(1-\lambda/2)n} + \frac{1}{\gamma m}$, and the constraint is that $\rho$ is a probability distribution. We use $\hat L$ to denote the vector of empirical losses of $h \in \cal{H}$ and $\hat \D$ to denote the matrix of empirical disagreements. We have:
\begin{align*}
    (\nabla f)_h &= 2a \hat L(h,S) - 2b \sum_{h'} \rho(h') \D(h,h',S) + 2c\lr{1+\ln \frac{\rho}{\pi}},\\
    \nabla f &= 2\lr{a \hat L - b \hat \D \rho + c\lr{1+\ln \frac{\rho}{\pi}}}.
\end{align*}

\subsection{Alternating optimization procedure}
In our experiments, we applied an alternating optimization procedure to improve the weighting $\rho$ of the ensemble members as well as the parameters $\lambda$ and, when considering the disagreement, $\gamma$.

Let $\numhyp=|\cal{H}|$ denote the number of ensemble members.
We parameterize $\rho$
by $\tilde{\rho}\in\mathbb{R}^{\numhyp}$ with 
$\rho=\operatorname{softmax}(\tilde{\rho})$, where $\rho_i = \frac{\exp{\tilde{\rho}_i}}{\sum_{j=1}^{\numhyp}  \exp{\tilde{\rho}_j} }$ for $i=1,\dots,\numhyp$. This ensures that $\rho$ is a proper probability distribution and allows us to apply unconstrained optimization in the adaption of $\rho$. Because  we are using uniform priors $\pi$ and due to the regularization in terms of the  Kullback-Leibler divergence between $\rho$ and $\pi$ in the bounds, excluding  $\rho_i\in\{0,1\}$ for each $i=1,\dots,B$ is not a limitation.  

Starting from uniform $\rho$ and the corresponding optimal $\lambda$ and, if applicable, $\gamma$, we looped through the following steps: We applied iterative gradient-based optimization of $\rho$ parameterized by $\tilde{\rho}$ until the bound did not improve for 10 iterations.
Then we computed  the optimal $\lambda$ and, in the case of the $\DIS$ bound, $\gamma$ for the optimized $\rho$. We stopped if the change in the bound  was smaller than $10^{-9}$.
We applied iRProp$^+$  for the gradient based optimization, a first order method with adaptive individual step sizes \citep{igel:01e,florescu:18}.

\section{Experiments}
\label{app:experiments}

This section provides details on the data sets used in the experiments and provides details, additional figures, and numerical values for the empirical evaluations: empirical evaluation of the bounds using a standard random forest with uniform weighting (Section~\ref{apx:rf-bagging}, expanding the first experiment and  Figure~\ref{fig:rf_example_bounds} in the body), and optimization of the weighting of the trees (Section~\ref{apx:bagging-optimized}, expanding the second experiment and  Figure~\ref{fig:opt_mvrisk_and_rho} in the body). We also include additional experiments with \emph{reduced bagging}, where we use less data for construction of each tree in order to leave larger out-of-bag sets for improved estimation of the second order quantities. The diagram below provides an overview of the experiments with references to the relevant subsections.
\begin{figure}[ht]
    \centering
    \begin{tikzpicture}
  \tikzstyle{heading} = [rectangle,draw, align=center, text width=3.6cm, minimum height=6mm, inner sep=0, outer sep=0,fill=black,text=white]
  \tikzstyle{subheading} = [rectangle,draw, align=center, text width=1.8cm, minimum height=6mm, inner sep=0, outer sep=0]
  \tikzstyle{cell} = [rectangle,draw, align=center, minimum height=1.8cm, text width=1.8cm, inner sep=0, outer sep=0]
  \tikzstyle{comp} = [{Triangle[length=3mm, width=7mm]}-{Triangle[length=3mm, width=7mm]}, line width=5mm, font=\footnotesize]
  \matrix
          {
    \node[draw=none,fill=none] {}; &
    \node[subheading] (topleft) {Full}; &
    \node[subheading] {Reduced};\\
    \node[subheading, rotate=90] (lefttop) {Uniform}; &
    \node[cell] (FulUniform) {}; &
    \node[cell] (RedUniform) {};\\
    \node[subheading, rotate=90] {Optimized}; &
    \node[cell] (FulOpt) {}; &
    \node[cell] (RedOpt) {};\\
          };
          \node[heading, anchor=south west] at (topleft.north west) {\textbf{Bagging}};
          \node[heading, anchor=south east, rotate=90] at (lefttop.north east) {\textbf{Weights}};
          \draw [comp] ([yshift=-3mm]FulUniform.center) -- ([yshift=3mm]FulOpt.center) node[midway,color=white] {\large{\textbf{A}}};
          \draw [comp] ([xshift=3mm]FulUniform.center) -- ([xshift=-3mm]RedUniform.center) node[midway,color=white] {\large{\textbf{B}}};
          \draw [comp] ([xshift=3mm]FulOpt.center) -- ([xshift=-3mm]RedOpt.center) node[midway,color=white] {\large{\textbf{C}}};
\end{tikzpicture}
\end{figure}
\begin{itemize}
    \item[\textbf{A}] Comparison of uniformly weighted random forests and random forests with optimized weighting in the full bagging setting: 
    Section~\ref{apx:bagging-optimized}, expanding on the experiments in the body of the paper.
    \item[\textbf{B}] Comparison of uniformly weighted random forests with standard (full) and reduced bagging: Section~\ref{apx:rf-reduced}.     \item[\textbf{C}] Comparison of random forests with optimized weighting in the full and reduced bagging settings:
    Section~\ref{apx:rf-reduced}
\end{itemize}
For each experiment, we report the mean and standard deviations of 50 runs. We used standard random forests trained on $S$ (80\% of the data) and evaluated on test set $\testset$ (20\%). 100 trees were used for each data set, and $\sqrt{d}$ features were considered in each split. The bounds were evaluated on the OOB data, with uniform $\pi$ and $\delta=0.05$. 

Furthermore, Section~\ref{apx:unlabeled} presents an empirical evaluation of the $\DIS$ bound in the setting with only a small amount of labeled data available and large amounts of unlabeled data. For this experiment, we reserved part of $S$ as unlabeled data and evaluated $\FO$, $\TND$ and $\DIS$. We varied the split between labeled training data and unlabeled data and report the means and standard deviations of 20 runs for each split.

\subsection{Data sets}
As mentioned, we considered data sets from the UCI and LibSVM repositories \citep{UCI,libsvm}, as well as \dataset{Fashion-MNIST} from Zalando Research\footnote{\url{https://research.zalando.com/welcome/mission/research-projects/fashion-mnist/}}. We used data sets with size $3000 \leq N \leq 70000$ and dimension $d \leq 1000$. These relatively large data sets were chosen in order to provide meaningful bounds in the standard bagging setting, where individual trees are trained on $n=0.8N$ randomly subsampled points with replacement and the size of the overlap of out-of-bag sets is roughly $n/9$.  
An overview of the data sets is given in Table~\ref{tab:data_sets}.
\begin{table}[t]
    \centering
    \caption{Data set overview. $c_{\min}$ and $c_{\max}$ denote the minimum and maximum class frequency.}
    \label{tab:data_sets}
    \begin{tabular}{@{}lcccccc@{}}\toprule
Dataset & $N$ & $d$ & $c$ & $c_{\min}$ & $c_{\max}$ & Source\\
\midrule
\dataset{Adult} & 32561 & 123 & 2 & 0.2408 & 0.7592 & LIBSVM (a1a) \\
\dataset{Cod-RNA} & 59535 & 8 & 2 & 0.3333 & 0.6667 & LIBSVM \\
\dataset{Connect-4} & 67557 & 126 & 3 & 0.0955 & 0.6583 & LIBSVM \\
\dataset{Fashion-MNIST} & 70000 & 784 & 10 & 0.1000 & 0.1000 & Zalando Research \\
\dataset{Letter} & 20000 & 16 & 26 & 0.0367 & 0.0406 & UCI \\
\dataset{MNIST} & 70000 & 780 & 10 & 0.0902 & 0.1125 & LIBSVM \\
\dataset{Mushroom} & 8124 & 22 & 2 & 0.4820 & 0.5180 & LIBSVM \\
\dataset{Pendigits} & 10992 & 16 & 10 & 0.0960 & 0.1041 & LIBSVM \\
\dataset{Phishing} & 11055 & 68 & 2 & 0.4431 & 0.5569 & LIBSVM \\
\dataset{Protein} & 24387 & 357 & 3 & 0.2153 & 0.4638 & LIBSVM \\
\dataset{SVMGuide1} & 3089 & 4 & 2 & 0.3525 & 0.6475 & LIBSVM \\
\dataset{SatImage} & 6435 & 36 & 6 & 0.0973 & 0.2382 & LIBSVM \\
\dataset{Sensorless} & 58509 & 48 & 11 & 0.0909 & 0.0909 & LIBSVM \\
\dataset{Shuttle} & 58000 & 9 & 7 & 0.0002 & 0.7860 & LIBSVM \\
\dataset{Splice} & 3175 & 60 & 2 & 0.4809 & 0.5191 & LIBSVM \\
\dataset{USPS} & 9298 & 256 & 10 & 0.0761 & 0.1670 & LIBSVM \\
\dataset{w1a} & 49749 & 300 & 2 & 0.0297 & 0.9703 & LIBSVM \\
\bottomrule
\end{tabular}

\end{table}

For all experiments, we removed patterns with missing entries and made a stratified split of the data set.
For data sets with a training and a test set (\dataset{SVMGuide1},
\dataset{Splice},
\dataset{Adult},
\dataset{w1a},
\dataset{MNIST},
\dataset{Shuttle},
\dataset{Pendigits},
\dataset{Protein},
\dataset{SatImage},
\dataset{USPS})
we combined the training and test sets and shuffled the entire set before splitting.
\subsection{Standard uniformly weighted random forests}\label{apx:rf-bagging}
This section provides additional figures and numerical values of the bounds computed for the standard uniformly weighted random forest using bagging (Figure~\ref{fig:rf_example_bounds} in the body), as well as additional statistics for the experiments. 

\begin{figure}[ht]
    \centering
    \includegraphics[width=\linewidth]{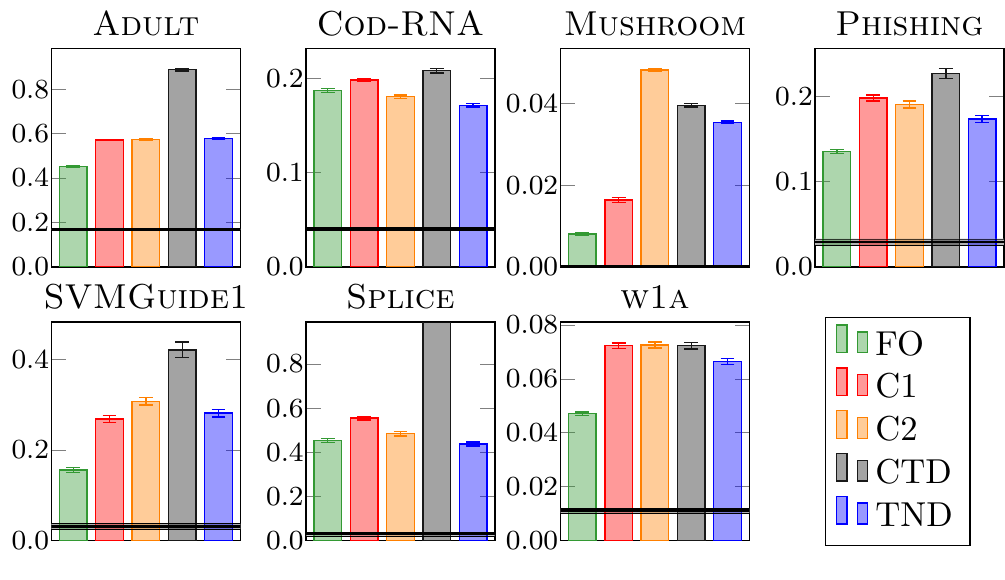}
    \caption{Plot of the bounds for binary data sets with the standard uniformly weighted random forests.
    The test losses are depicted by black lines.}
    \label{fig:uniform_all_bin}
\end{figure}
\begin{figure}[ht]
    \centering
    \includegraphics[width=\linewidth]{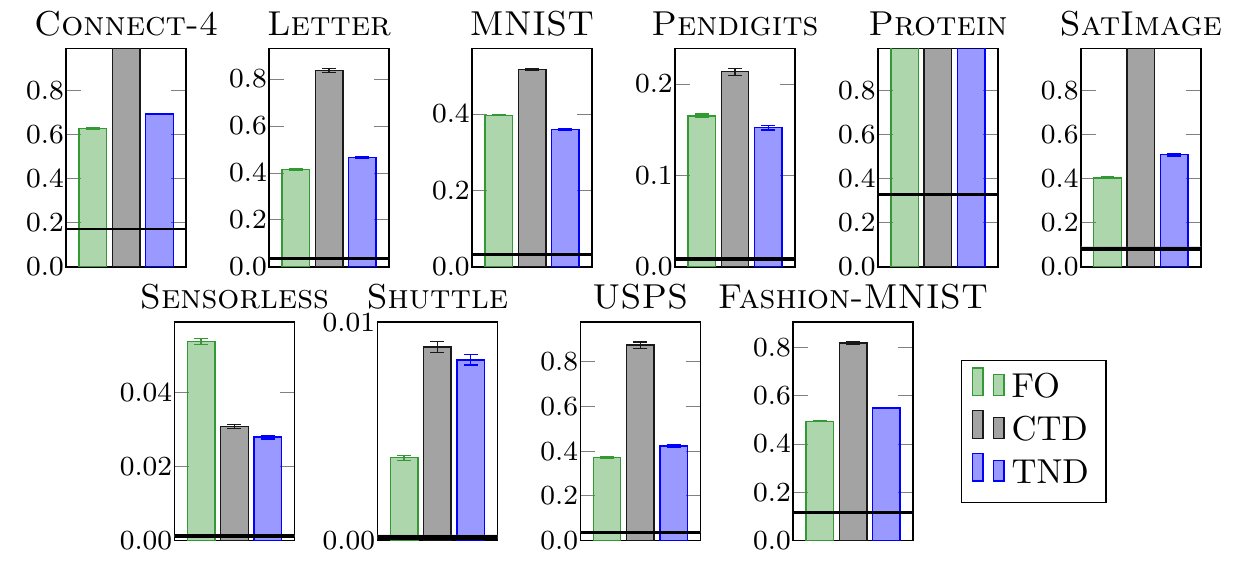}
    \caption{Plot of the bounds for multiclass data sets with standard uniformly weighted random forests. 
    The test losses are depicted by black lines.}
    \label{fig:uniform_all_mul}
\end{figure}
Figures~\ref{fig:uniform_all_bin} and \ref{fig:uniform_all_mul} plot the bounds obtained by the standard random forest for the binary and multiclass data sets respectively.
Table~\ref{tab:rf_bagging_bounds} reports the means and standard deviations for all data sets. Additional information (randomized loss, tandem loss, etc.) is reported in Table~\ref{tab:rf_bagging_info}.

$\TND$ is tightest for 2 out of 7 binary data sets and 3 out of 10 multiclass data sets, while $\FO$ is tightest for the rest. Figure \ref{fig:ratiofDisagreement} plots the ratio between the empirical disagreement $\E_{\rho^2}[\hat \D(h,h',S_h\cap S_{h'})]$ and the empirical randomized loss $\E_\rho[\hat L(h,S_h)]$ versus the ratio between the TND and FO bounds. This figure shows that $\TND$ bound tends to be tighter than $\FO$ when the disagreement is large in relation to the randomized loss. Since the amounts of data $|S_h \cap S_{h'}|$ available for estimation of the tandem losses are considerably smaller than the amounts of data $|S_h|$ available for estimation of the first order losses, the empirical disagreement has to be considerably larger than the empirical loss for $\TND$ to take the advantage over $\FO$. This is in agreement with the discussion provided in Sections \ref{sec:FOvsSO} and \ref{sec:fast-vs-slow}. 



Comparing $\TND$ to the other second order bounds, we see that $\TND$ is tighter (or almost as tight) in all cases, except for \dataset{Mushroom}, where $\Cone$ is tighter. This is due to $\Cone$ being given in terms of an upper bound on $\E_\rho[\L(h)]$ and a lower bound on $\E_{\rho^2}[\D(h,h')]$. With the lower bound being almost zero
, we have $\Cone \approx 2\FO$ and since the disagreement is very low, $\TND \approx 4\FO$. We note that even though $\Cone$ is tighter than $\TND$ in this case, it is still much weaker than $\FO$, because, as it has been discussed in Section~\ref{sec:FOvsSO}, problems with low disagreement are not well-suited for second order bounds. 

\begin{figure}[ht]
    \centering
    \includegraphics[width=0.75\linewidth]{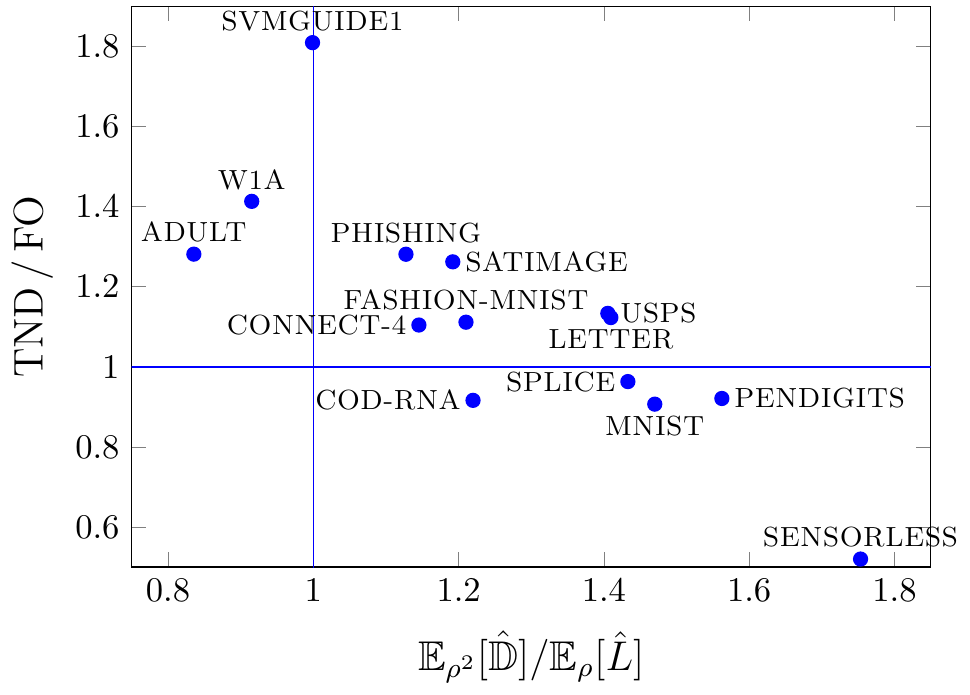}
    \caption{Ratio between the empirical disagreement $\E_{\rho^2}[\hat \D(h,h',S_h\cap S_{h'})]$ and the empirical randomized loss $\E_\rho[\hat L(h,S_h)]$ versus the ratio between the TND bound and the FO bound. The data sets Mushroom, Shuttle and Protein are excluded. The first two because the randomized loss is extremely small. And the third one because the bounds are higher than 1.}
    \label{fig:ratiofDisagreement}
\end{figure} 

\subsection{Standard random forests with optimized weights}\label{apx:bagging-optimized}
This section contains numerical values and additional figures for the optimization experiments provided in the second experiment in the body (Figure~\ref{fig:opt_mvrisk_and_rho}).
$\FO$ was optimized using Theorem~\ref{thm:lambdabound} and the alternating update rules of \citep{TIWS17}. For optimizing $\TND$, we used iRProp$^+$ \citep{igel:01e}, see Appendix~\ref{app:minimization}. We denote the weights after optimization of $\FO$ and $\TND$ by $\rho^*_{\FO}$ and $\rho^*_{\TND}$, respectively.

Figures~\ref{fig:opt_bounds_bin} and \ref{fig:opt_bounds_mul} show the bounds before and after optimization for binary and multiclass data sets respectively. The $\FO$ bound achieves higher reduction after minimization, however, as illustrated in both figures and Figure~\ref{fig:opt_mvrisk_and_rho} in the body, this improvement comes at the cost of considerable increase of the test loss $L(MV_{\rho^*_{\FO}},\testset)$. The latter happens because $\FO$ places most of the posterior mass on a few top classifiers and diminishes the power of the ensemble, see Figure~\ref{fig:example_rho}. The improvement of the $\TND$ after minimization is more modest, but on a highly positive side it does not degrade the classifier. 
\begin{figure}[ht]
    \centering
    \includegraphics[width=\linewidth]{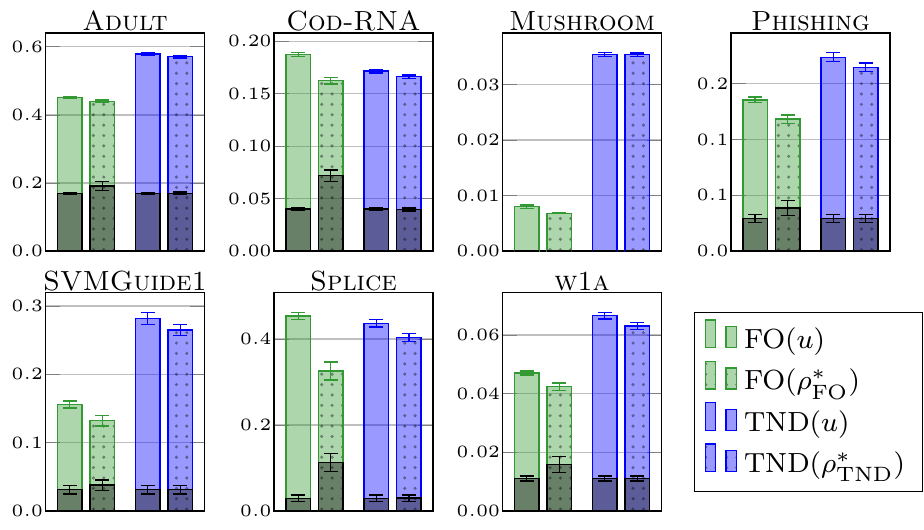}
    \caption{Comparison of the bounds before (not dotted bars) and after (dotted bars) optimization for the binary data sets. The test risk is shown in black.}
    \label{fig:opt_bounds_bin}
\end{figure}
\begin{figure}[ht]
    \centering
    \includegraphics[width=\linewidth]{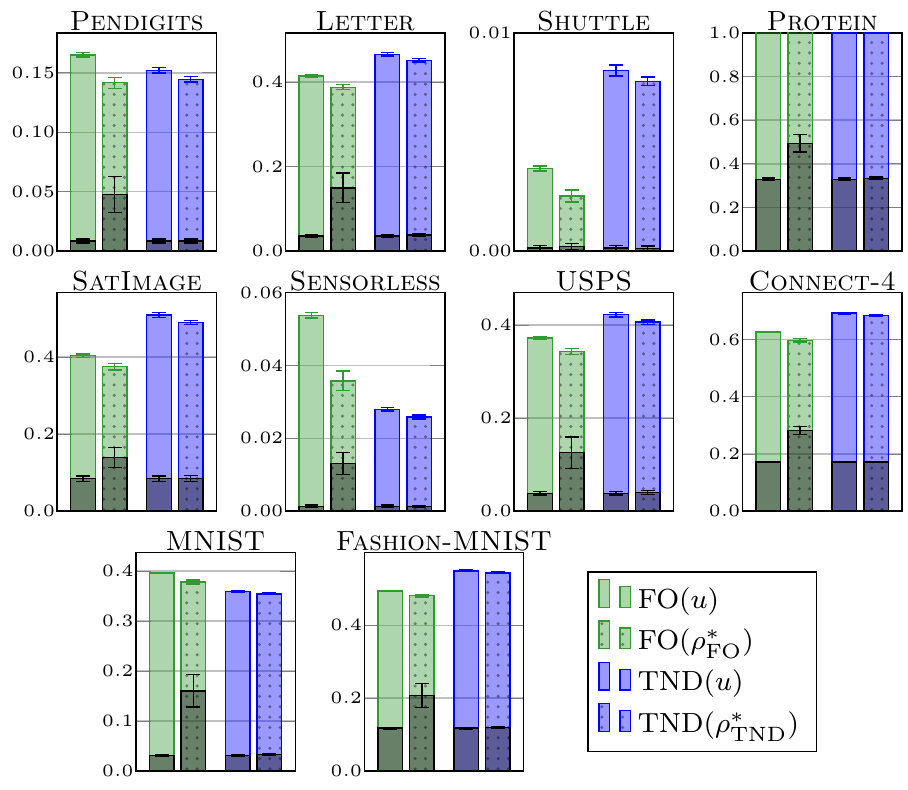}
    \caption{Comparison of the bounds before (not dotted bars) and after (dotted bars) optimization for the multiclass data sets. The test risk is shown in black.}
    \label{fig:opt_bounds_mul}
\end{figure} 

Table~\ref{tab:opt_mvrisk} shows the numerical values used in Figure~\ref{fig:opt_mvrisk}.

\subsection{Random forests with reduced bagging vs.\ full bagging with uniform and optimized weights}\label{apx:rf-reduced}

The $\TND$ bound depends on the size of overlaps $S_h\cap S_{h'}$, which are used to estimate the tandem losses and define the denominator of the bound. In order to ensure that the overlaps $S_h\cap S_{h'}$ are not too small, it might be beneficial to generate splits with $|S_h|$ of at least $(2/3)n$, so that $|S_h\cap S_{h'}|$ is at least $n/3$. In our application to random forests we reduce the number of sampled points in bagging from $n$ to $n/2$, which increases the number of out-of-bag samples $|S_h|$ from roughly $n/3$ to roughly $(2/3)n$ and the overlaps from roughly $n/9$ to $n/3$. We show that the corresponding decrease in $|T_h|$ leads to a relatively small decrease of prediction quality of individual trees and improves the bounds. 

We call the bagging procedure that samples $n$ points with replacement a \emph{standard bagging} or \emph{full bagging} and the procedure that samples $n/2$ points \emph{reduced bagging}. This section presents results for random forests trained with \emph{reduced bagging}, including comparisons to the full bagging setting.

Figure~\ref{fig:mvrisk_compare} compares the test risk in the full bagging and the reduced bagging settings with uniform and optimized weights.
In both uniform and optimized weights we see a limited increase (and in a few cases even a small decrease) in test risk when reducing the amount of data sampled in bagging, indicating that reduced bagging has relatively minor impact on the quality of a uniformly weighted ensemble. At the same time, Figures~\ref{fig:bound_compare}, \ref{fig:bounds_compare_binary}, \ref{fig:bounds_compare_multi}, \ref{fig:opt_bound_reduce_compare_bin}, and \ref{fig:opt_bound_reduce_compare_mul} show that the bounds are improved in most cases, sometimes considerably.

Table~\ref{tab:rf_sample_bounds} reports the means and standard deviations for all data sets. Additional information (randomized loss, tandem loss, etc.) is reported in Table~\ref{tab:rf_reduced_info}. Table~\ref{tab:opt_mvrisk_reduced} reports the performance of the final majority vote with and without optimized weights.


\begin{figure}[ht]
    \centering
    \includegraphics[width=\linewidth]{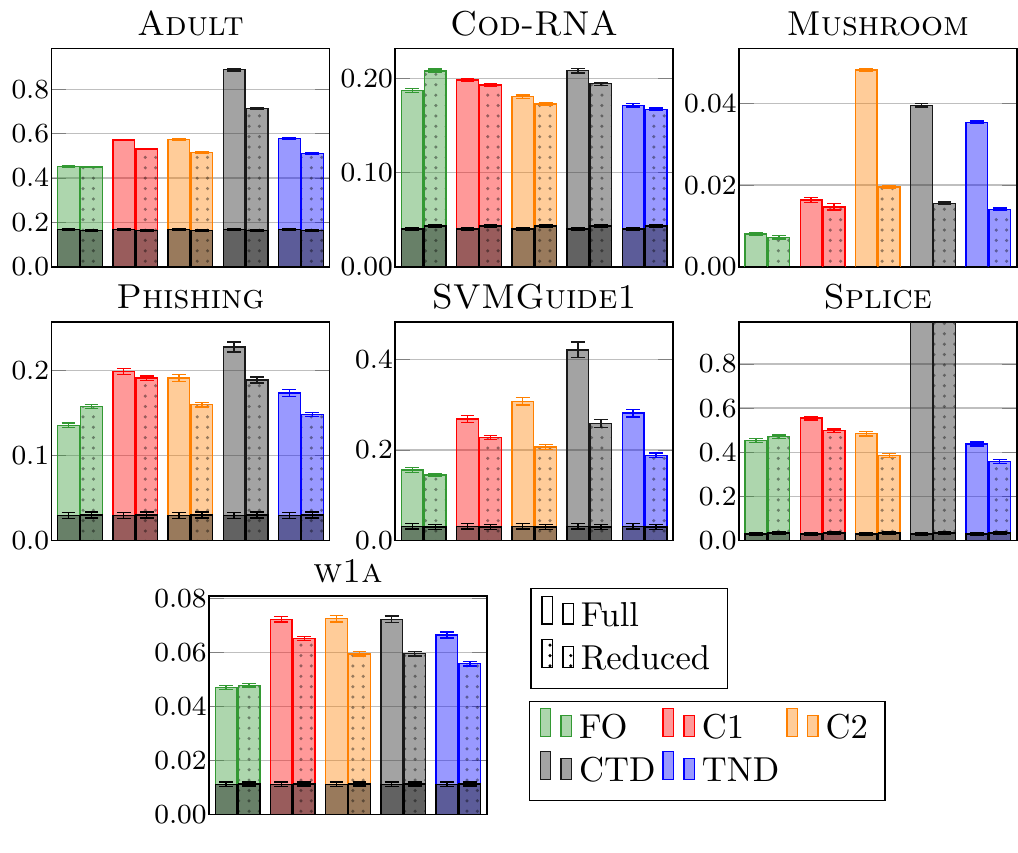}
    \caption{Comparison of the bounds in the full (not dotted) and reduced bagging (dotted) setting with uniform weighting for binary data sets. The test risk is shown in black.
    }
    \label{fig:bounds_compare_binary}
\end{figure}
\begin{figure}[ht]
    \centering
    \includegraphics[width=\linewidth]{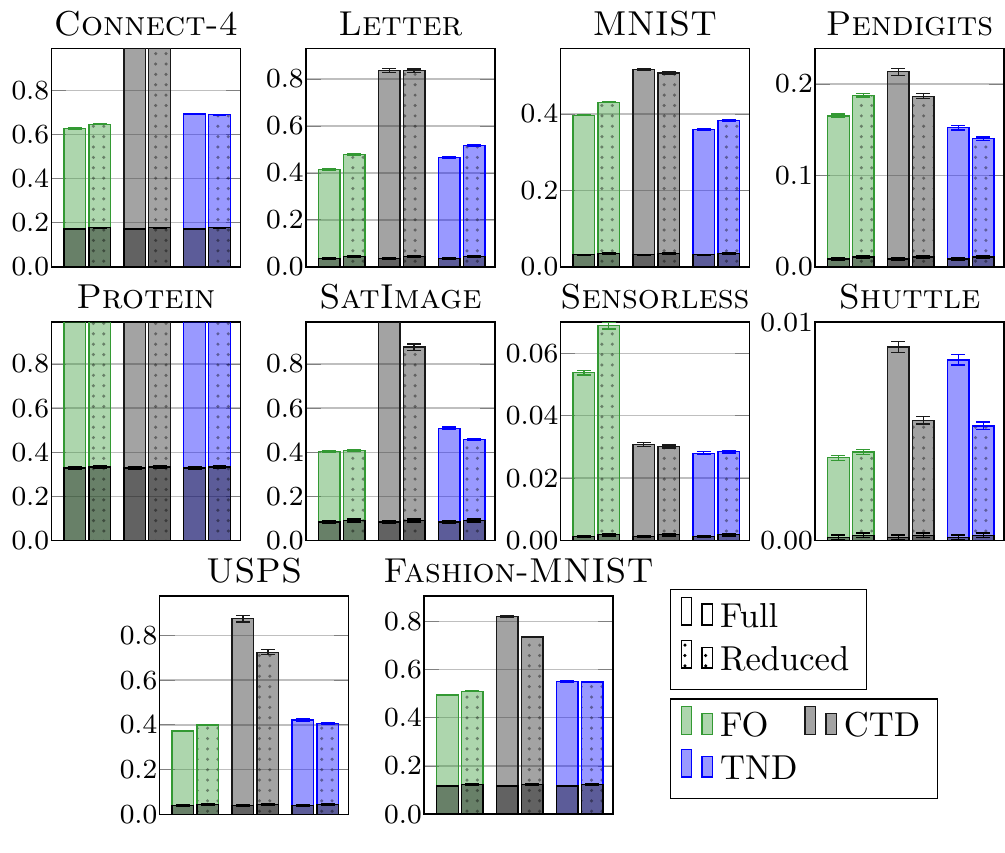}
    \caption{Comparison of the bounds in the full (not dotted) and reduced bagging (dotted) setting with uniform weighting for multiclass data sets. The test risk is shown in black.
    }
    \label{fig:bounds_compare_multi}
\end{figure}
\begin{figure}[ht]
    \centering
    \includegraphics[width=\linewidth]{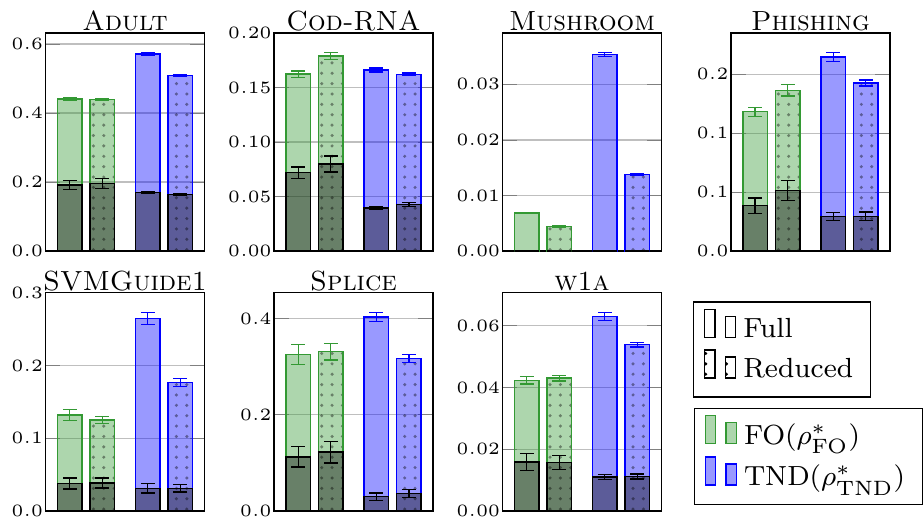}
    \caption{Comparison of the bounds computed for the random forest with optimized weights in the standard bagging (not dotted) and the reduced bagging (dotted) setting on binary data sets.}
    \label{fig:opt_bound_reduce_compare_bin}
\end{figure}
\begin{figure}[ht]
    \centering
    \includegraphics[width=\linewidth]{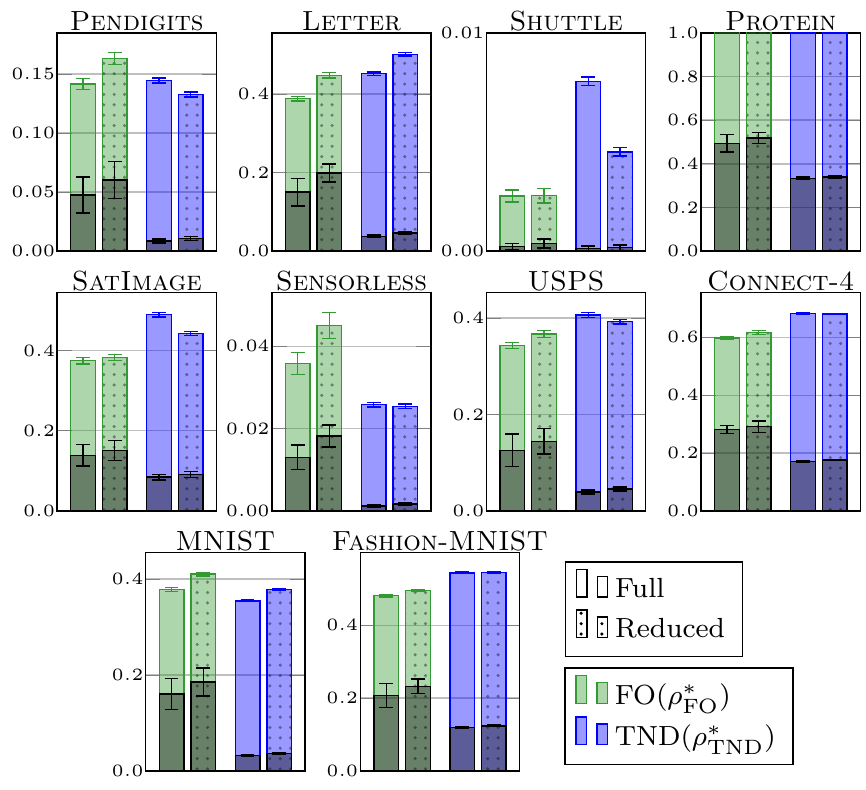}
    \caption{Comparison of the bounds computed for the random forest with optimized weights in the standard bagging (not dotted) and the reduced bagging (dotted) setting on multiclass data sets.}
    \label{fig:opt_bound_reduce_compare_mul}
\end{figure}


\begin{figure}[ht]
    \centering
    \includegraphics{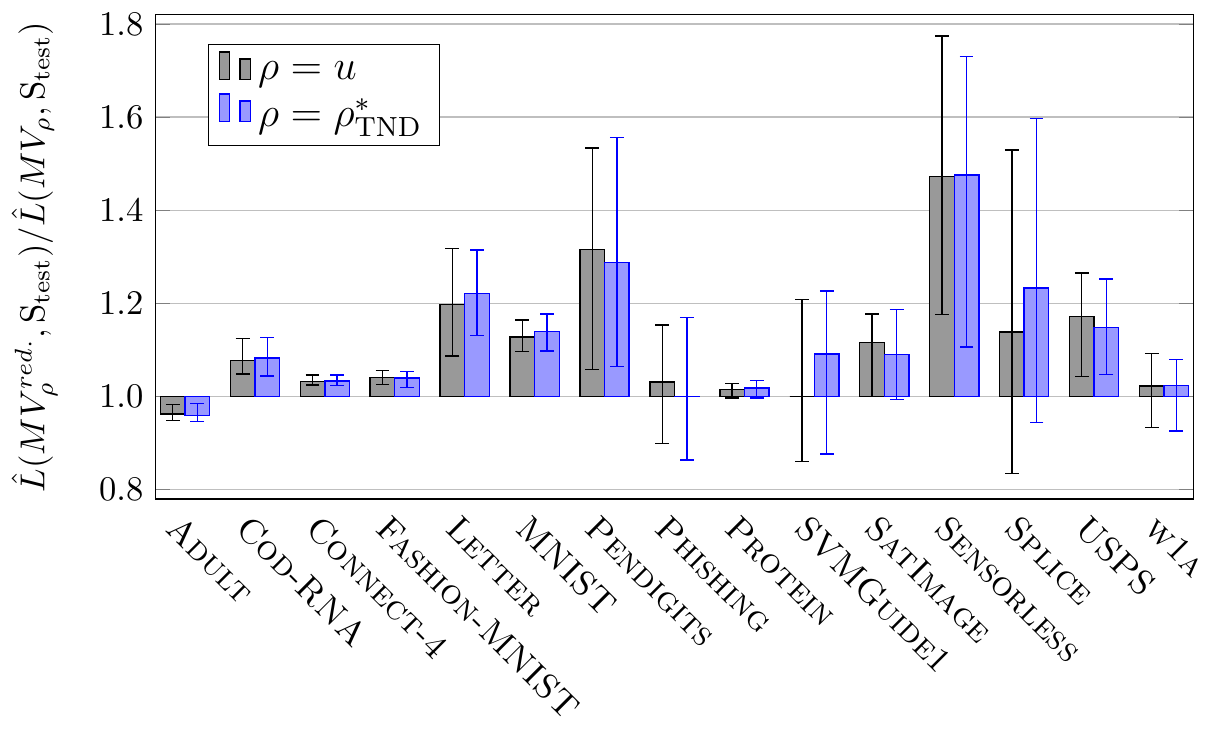}
    \caption{Median, 25\%, and 75\% quantiles of the ratio between the test risk in the reduced and full bagging settings with uniform and optimized weights $\rho^*_{\TND}$. Results on \dataset{Mushroom} and \dataset{Shuttle} are left out, as the test risk is 0 in some cases.}
    \label{fig:mvrisk_compare}
\end{figure}

\begin{figure}[ht]
    \centering
    \includegraphics{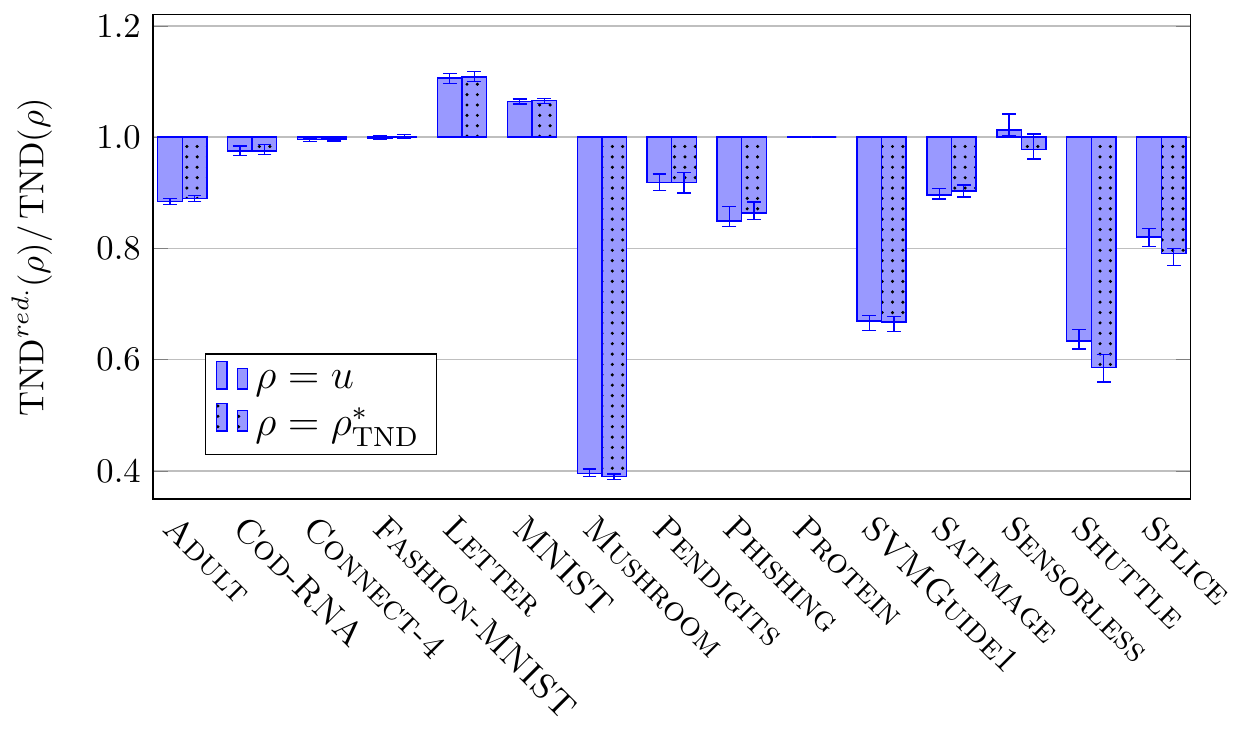}
    \caption{Median, 25\%, and 75\% quantiles of the ratio between the $\TND$ bound in the reduced and full bagging settings with uniform and optimized weights $\rho^*_{\TND}$.}
    \label{fig:bound_compare}
\end{figure}
\subsection{$\DIS$ bound vs.\ $\TND$ bound in presence of unlabeled data}\label{apx:unlabeled}
In this section we compare the tightness of the $\TND$ and $\DIS$ bounds in a setting, where a lot of unlabeled data is available.

We considered the largest binary data sets ($N>8000$) from Table~\ref{tab:data_sets}. As in the previous setting, 20\% of the data, $\testset$, was reserved for testing. The remaining 80\%, were split with a fraction $r\in [0,1]$ of patterns $S$ used for training, and a fraction $(1-r)$ set aside as unlabeled patterns, $S_u$. Forests with 100 trees were trained with bagging, using the Gini criterion for splitting and considering $\sqrt{d}$ features in each split. We considered values of $r\in \{0.05,0.1,...,0.5\}$. For each split, we repeated the experiment 20 times.

\begin{figure}[ht]
    \centering
    \includegraphics[width=\linewidth]{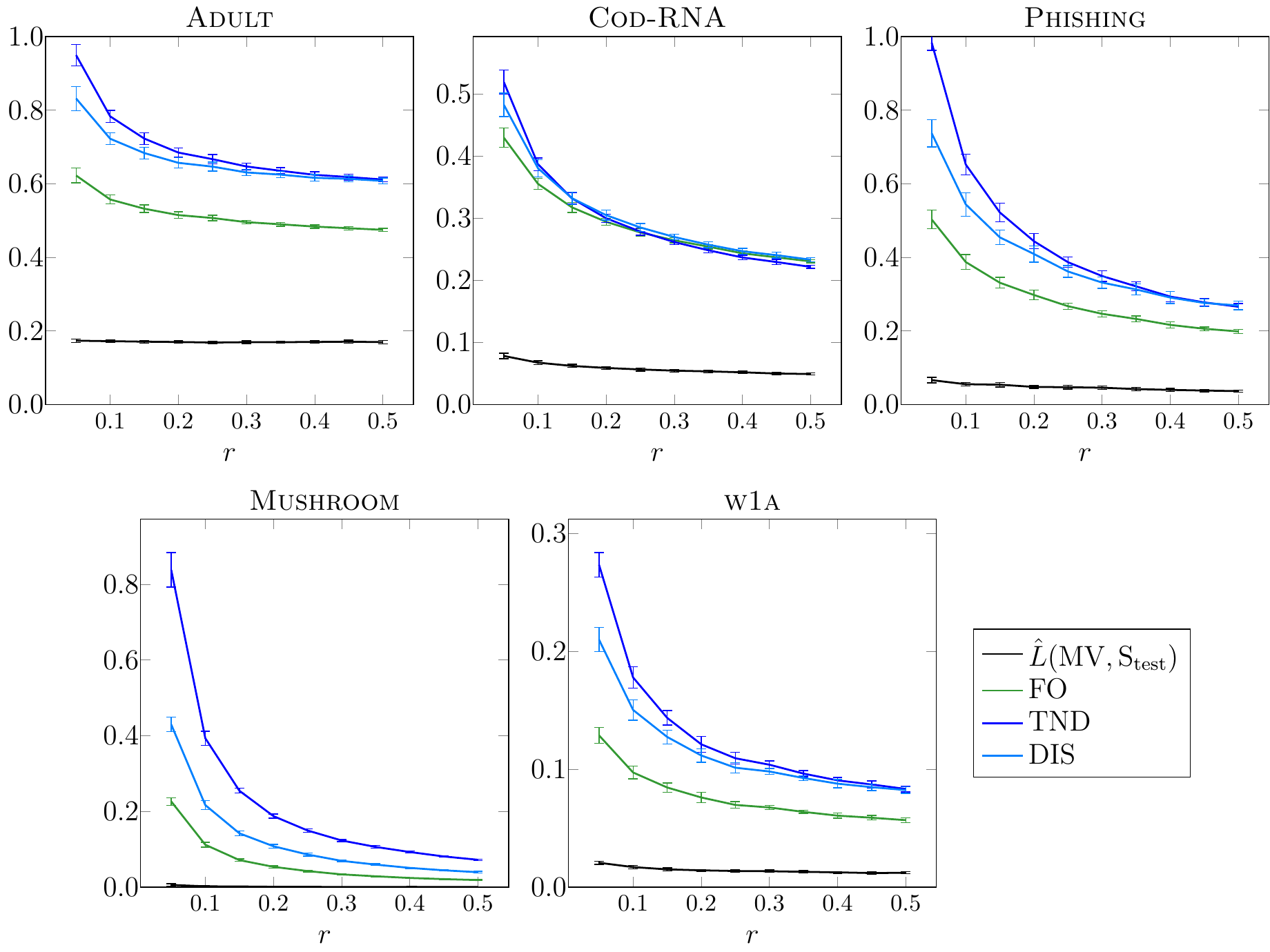}
    \caption{Test risk, $\FO$, $\TND$ and $\DIS$ bounds as a function of the fraction $r$ of labeled points. Means and standard deviations over 20 runs are reported.}
    \label{fig:rf_unlabeled}
\end{figure}
Figure~\ref{fig:rf_unlabeled} plots the test risk and $\FO$, $\TND$ and $\DIS$ bounds as a function of $r$. For each data set, the mean and standard deviation over 20 runs are plotted. In agreement with the discussion in Section~\ref{sec:fast-vs-slow}, $\DIS$ had the highest advantage over $\TND$ when the amount of unlabeled data relative to labeled data was the largest. As the amount of unlabeled data relative to labeled data was decreasing the difference between the bounds became smaller, with $\TND$ eventually overtaking $\DIS$ in most cases.

\begin{landscape}
\begin{table}[ht]
    \caption{Values of the computed bounds when using bagging. Mean and standard deviation are reported for 50 runs on each data set. The tightest bound overall is marked in \textbf{bold}, with the second tightest marked with \underline{underline}. Standard deviations are given in parenthesis. Note that the bounds $\Cone$ and $\Ctwo$ are only defined for binary data sets.}
    \label{tab:rf_bagging_bounds}
    \centering
    \begin{tabular}{@{}lcccccc@{}}\toprule
Dataset & $\hat{L}(\MV,\testset)$ & $\FO$ & $\Cone$ & $\Ctwo$ & $\CTD$ & $\TND$\\
\midrule
\dataset{Adult} & 0.16941 (0.00303) & \textbf{0.45164} (0.00213) & \underline{0.57227} (0.00256) & 0.57318 (0.00312) & 0.88830 (0.00626) & 0.57882 (0.00364) \\
\dataset{Cod-RNA} & 0.04018 (0.00150) & 0.18717 (0.00194) & 0.19835 (0.00171) & \underline{0.18054} (0.00181) & 0.20835 (0.00240) & \textbf{0.17151} (0.00176) \\
\dataset{Connect-4} & 0.17120 (0.00204) & \textbf{0.62706} (0.00148) & - & - & $>1$ & \underline{0.69258} (0.00230) \\
\dataset{Fashion-MNIST} & 0.11752 (0.00228) & \textbf{0.49426} (0.00138) & - & - & 0.81911 (0.00390) & \underline{0.54943} (0.00229) \\
\dataset{Letter} & 0.03602 (0.00315) & \textbf{0.41503} (0.00235) & - & - & 0.83652 (0.00872) & \underline{0.46613} (0.00374) \\
\dataset{MNIST} & 0.03144 (0.00134) & \underline{0.39628} (0.00117) & - & - & 0.51669 (0.00267) & \textbf{0.35937} (0.00154) \\
\dataset{Mushroom} & 0.00000 (0.00000) & \textbf{0.00801} (0.00031) & \underline{0.01638} (0.00061) & 0.04817 (0.00043) & 0.03952 (0.00040) & 0.03539 (0.00034) \\
\dataset{Pendigits} & 0.00854 (0.00183) & \underline{0.16515} (0.00181) & - & - & 0.21330 (0.00380) & \textbf{0.15211} (0.00238) \\
\dataset{Phishing} & 0.02916 (0.00356) & \textbf{0.13554} (0.00255) & 0.19865 (0.00351) & 0.19105 (0.00415) & 0.22756 (0.00597) & \underline{0.17367} (0.00409) \\
\dataset{Protein} & 0.32959 (0.00500) & $>1$ & - & - & $>1$ & $>1$ \\
\dataset{SVMGuide1} & 0.03129 (0.00628) & \textbf{0.15559} (0.00521) & \underline{0.26894} (0.00782) & 0.30776 (0.00823) & 0.42148 (0.01735) & 0.28161 (0.00865) \\
\dataset{SatImage} & 0.08386 (0.00716) & \textbf{0.40328} (0.00409) & - & - & $>1$ & \underline{0.50910} (0.00605) \\
\dataset{Sensorless} & 0.00131 (0.00034) & 0.05380 (0.00077) & - & - & \underline{0.03077} (0.00055) & \textbf{0.02797} (0.00049) \\
\dataset{Shuttle} & 0.00015 (0.00011) & \textbf{0.00379} (0.00011) & - & - & 0.00886 (0.00025) & \underline{0.00827} (0.00024) \\
\dataset{Splice} & 0.02957 (0.00798) & \underline{0.45351} (0.00822) & 0.55494 (0.00837) & 0.48391 (0.00930) & $>1$ & \textbf{0.43683} (0.00903) \\
\dataset{USPS} & 0.03820 (0.00395) & \textbf{0.37237} (0.00289) & - & - & 0.87450 (0.01387) & \underline{0.42214} (0.00463) \\
\dataset{w1a} & 0.01108 (0.00081) & \textbf{0.04704} (0.00065) & 0.07233 (0.00096) & 0.07254 (0.00114) & 0.07234 (0.00122) & \underline{0.06649} (0.00111) \\
\bottomrule
\end{tabular}

\end{table}
\end{landscape}
\begin{table}[ht]
    \caption{Statistics for each data set when using bagging. We use the following short-hand: $\E_\rho[\hat{L}]=\E_\rho[\hat{L}(h,S_h)]$, $\E_{\rho^2}[\hat{\D}]= \E_{\rho^2}[\hat{\D}(h,h',S_h\cap S_{h'})]$, $\E_{\rho^2}[\hat{L}] = \E_{\rho^2}[\hat{L}(h,h',S_h\cap S_{h'})]$}
    \label{tab:rf_bagging_info}
    \centering
    \begin{tabular}{@{}lcccccc@{}}\toprule
Dataset & $\hat{L}(\MV,\testset)$ & $\E_\rho[\hat{L}]$ & $\min |S_h|$ & $\E_{\rho^2}[\hat{\D}]$ & $\E_{\rho^2}[\hat{L}]$ & $\min |S_h\cap S_{h'}|$\\
\midrule
\dataset{Adult} & 0.16941 & 0.20851 & 9459.96 & 0.17422 & 0.12138 & 3359.92 \\
\dataset{Cod-RNA} & 0.04018 & 0.08456 & 17348.12 & 0.10315 & 0.03298 & 6228.60 \\
\dataset{Connect-4} & 0.17120 & 0.29985 & 19697.48 & 0.34343 & 0.15527 & 7077.06 \\
\dataset{Fashion-MNIST} & 0.11752 & 0.23465 & 20410.22 & 0.28396 & 0.12142 & 7335.90 \\
\dataset{Letter} & 0.03602 & 0.18644 & 5785.64 & 0.26280 & 0.08995 & 2032.40 \\
\dataset{MNIST} & 0.03144 & 0.18663 & 20416.56 & 0.27435 & 0.07668 & 7337.68 \\
\dataset{Mushroom} & 0.00000 & 0.00019 & 2327.40 & 0.00036 & 0.00001 & 797.00 \\
\dataset{Pendigits} & 0.00854 & 0.06403 & 3158.70 & 0.10004 & 0.01828 & 1095.48 \\
\dataset{Phishing} & 0.02916 & 0.05097 & 3178.92 & 0.05747 & 0.02224 & 1100.30 \\
\dataset{Protein} & 0.32959 & 0.53934 & 7065.98 & 0.59019 & 0.31349 & 2497.42 \\
\dataset{SVMGuide1} & 0.03129 & 0.04606 & 869.08 & 0.04601 & 0.02297 & 284.54 \\
\dataset{SatImage} & 0.08386 & 0.16636 & 1837.52 & 0.19830 & 0.08061 & 623.54 \\
\dataset{Sensorless} & 0.00131 & 0.02193 & 17049.78 & 0.03845 & 0.00318 & 6112.32 \\
\dataset{Shuttle} & 0.00015 & 0.00069 & 16899.58 & 0.00097 & 0.00023 & 6055.44 \\
\dataset{Splice} & 0.02957 & 0.17561 & 895.60 & 0.25165 & 0.04976 & 293.14 \\
\dataset{USPS} & 0.03820 & 0.15736 & 2668.56 & 0.22115 & 0.06944 & 916.28 \\
\dataset{w1a} & 0.01108 & 0.01852 & 14483.72 & 0.01695 & 0.01005 & 5176.26 \\
\bottomrule
\end{tabular}

\end{table}

\begin{landscape}
\begin{table}[ht]
    \caption{Values of the computed bounds when using reduced bagging. Mean and standard deviation are reported for 50 runs on each data set. The tightest bound overall is marked in \textbf{bold}, with the second tightest marked with \underline{underline}. Standard deviations are given in parenthesis.  Note that the bounds $\Cone$ and $\Ctwo$ are only defined for binary data sets.}
    \label{tab:rf_sample_bounds}
    \centering
    \begin{tabular}{@{}lcccccc@{}}\toprule
Dataset & $\hat{L}(\MV,\testset)$ & $\FO$ & $\Cone$ & $\Ctwo$ & $\CTD$ & $\TND$\\
\midrule
\dataset{Adult} & 0.16370 (0.00322) & \textbf{0.44940} (0.00193) & 0.53068 (0.00226) & 0.51552 (0.00241) & 0.71364 (0.00431) & \underline{0.51148} (0.00258) \\
\dataset{Cod-RNA} & 0.04346 (0.00158) & 0.20793 (0.00194) & 0.19300 (0.00132) & \underline{0.17299} (0.00126) & 0.19433 (0.00161) & \textbf{0.16725} (0.00124) \\
\dataset{Connect-4} & 0.17716 (0.00238) & \textbf{0.64645} (0.00164) & - & - & $>1$ & \underline{0.68942} (0.00215) \\
\dataset{Fashion-MNIST} & 0.12258 (0.00244) & \textbf{0.50935} (0.00138) & - & - & 0.73495 (0.00307) & \underline{0.54870} (0.00204) \\
\dataset{Letter} & 0.04326 (0.00374) & \textbf{0.47741} (0.00234) & - & - & 0.83563 (0.00740) & \underline{0.51579} (0.00402) \\
\dataset{MNIST} & 0.03553 (0.00153) & \underline{0.43055} (0.00140) & - & - & 0.50697 (0.00288) & \textbf{0.38250} (0.00180) \\
\dataset{Mushroom} & 0.00000 (0.00000) & \textbf{0.00724} (0.00039) & 0.01468 (0.00075) & 0.01959 (0.00036) & 0.01554 (0.00035) & \underline{0.01408} (0.00033) \\
\dataset{Pendigits} & 0.01052 (0.00164) & 0.18755 (0.00177) & - & - & \underline{0.18659} (0.00280) & \textbf{0.14001} (0.00187) \\
\dataset{Phishing} & 0.03009 (0.00364) & \underline{0.15753} (0.00222) & 0.19101 (0.00261) & 0.15970 (0.00262) & 0.18902 (0.00357) & \textbf{0.14832} (0.00253) \\
\dataset{Protein} & 0.33413 (0.00552) & $>1$ & - & - & $>1$ & $>1$ \\
\dataset{SVMGuide1} & 0.03006 (0.00541) & \textbf{0.14470} (0.00359) & 0.22802 (0.00481) & 0.20687 (0.00532) & 0.25874 (0.00852) & \underline{0.18814} (0.00529) \\
\dataset{SatImage} & 0.09063 (0.00713) & \textbf{0.40936} (0.00352) & - & - & 0.87653 (0.01369) & \underline{0.45752} (0.00490) \\
\dataset{Sensorless} & 0.00187 (0.00038) & 0.06891 (0.00111) & - & - & \underline{0.03011} (0.00048) & \textbf{0.02847} (0.00045) \\
\dataset{Shuttle} & 0.00024 (0.00011) & \textbf{0.00405} (0.00010) & - & - & 0.00551 (0.00017) & \underline{0.00525} (0.00017) \\
\dataset{Splice} & 0.03398 (0.00759) & 0.47039 (0.00885) & 0.49763 (0.00868) & \underline{0.38612} (0.00884) & $>1$ & \textbf{0.35799} (0.00857) \\
\dataset{USPS} & 0.04385 (0.00457) & \textbf{0.39916} (0.00328) & - & - & 0.72489 (0.01091) & \underline{0.40572} (0.00434) \\
\dataset{w1a} & 0.01121 (0.00068) & \textbf{0.04782} (0.00064) & 0.06528 (0.00075) & 0.05949 (0.00079) & 0.05955 (0.00083) & \underline{0.05585} (0.00077) \\
\bottomrule
\end{tabular}

\end{table}
\end{landscape}
\begin{table}[ht]
    \caption{Statistics for each data set when using reduced bagging. We use the following short-hand: $\E_\rho[\hat{L}]=\E_\rho[\hat{L}(h,S_h)]$, $\E_{\rho^2}[\hat{\D}]= \E_{\rho^2}[\hat{\D}(h,h',S_h\cap S_{h'})]$, $\E_{\rho^2}[\hat{L}] = \E_{\rho^2}[\hat{L}(h,h',S_h\cap S_{h'})]$}
    \label{tab:rf_reduced_info}
    \centering
    \begin{tabular}{@{}lcccccc@{}}\toprule
Dataset & $\hat{L}(\MV,\testset)$ & $\E_\rho[\hat{L}]$ & $\min |S_h|$ & $\E_{\rho^2}[\hat{\D}]$ & $\E_{\rho^2}[\hat{L}]$ & $\min |S_h\cap S_{h'}|$\\
\midrule
\dataset{Adult} & 0.16370 & 0.21105 & 15702.72 & 0.19390 & 0.11409 & 9401.66 \\
\dataset{Cod-RNA} & 0.04346 & 0.09649 & 28763.86 & 0.12166 & 0.03566 & 17277.52 \\
\dataset{Connect-4} & 0.17716 & 0.31235 & 32643.02 & 0.35965 & 0.16125 & 19614.80 \\
\dataset{Fashion-MNIST} & 0.12258 & 0.24472 & 33827.88 & 0.29624 & 0.12725 & 20337.04 \\
\dataset{Letter} & 0.04326 & 0.22119 & 9630.06 & 0.30819 & 0.11160 & 5745.70 \\
\dataset{MNIST} & 0.03553 & 0.20590 & 33828.08 & 0.30029 & 0.08716 & 20331.38 \\
\dataset{Mushroom} & 0.00000 & 0.00057 & 3894.98 & 0.00104 & 0.00005 & 2299.78 \\
\dataset{Pendigits} & 0.01052 & 0.07817 & 5276.38 & 0.12164 & 0.02289 & 3127.52 \\
\dataset{Phishing} & 0.03009 & 0.06443 & 5308.52 & 0.07962 & 0.02464 & 3146.54 \\
\dataset{Protein} & 0.33413 & 0.54520 & 11746.54 & 0.59434 & 0.31870 & 7019.70 \\
\dataset{SVMGuide1} & 0.03006 & 0.04796 & 1470.06 & 0.05098 & 0.02247 & 853.08 \\
\dataset{SatImage} & 0.09063 & 0.17665 & 3080.48 & 0.21070 & 0.08664 & 1813.18 \\
\dataset{Sensorless} & 0.00187 & 0.03000 & 28265.20 & 0.05216 & 0.00462 & 16971.34 \\
\dataset{Shuttle} & 0.00024 & 0.00101 & 28014.66 & 0.00144 & 0.00034 & 16823.72 \\
\dataset{Splice} & 0.03398 & 0.19427 & 1511.18 & 0.27746 & 0.05557 & 876.88 \\
\dataset{USPS} & 0.04385 & 0.17617 & 4458.86 & 0.24642 & 0.07926 & 2636.72 \\
\dataset{w1a} & 0.01121 & 0.01991 & 24022.24 & 0.01957 & 0.01013 & 14412.94 \\
\bottomrule
\end{tabular}

\end{table}

\begin{table}[ht]
    \centering
    \begin{tabular}{@{}lccc@{}}\toprule
Dataset & $\hat L(\MV_u,\testset)$ & $\hat L(\MV_{\rho_{\FO}^*}, \testset)$ & $\hat L(\MV_{\rho_{\TND}^*}, \testset)$\\
\midrule
\dataset{Adult} & \textbf{0.16941} (0.00303) & 0.19136 (0.01335) & \underline{0.17004} (0.00313) \\
\dataset{Cod-RNA} & \underline{0.04018} (0.00150) & 0.07193 (0.00530) & \textbf{0.03963} (0.00138) \\
\dataset{Connect-4} & \textbf{0.17120} (0.00204) & 0.28148 (0.01407) & \underline{0.17123} (0.00202) \\
\dataset{Fashion-MNIST} & \textbf{0.11752} (0.00228) & 0.20678 (0.03283) & \underline{0.11895} (0.00222) \\
\dataset{Letter} & \textbf{0.03602} (0.00315) & 0.14998 (0.03493) & \underline{0.03784} (0.00336) \\
\dataset{MNIST} & \textbf{0.03144} (0.00134) & 0.16014 (0.03238) & \underline{0.03223} (0.00137) \\
\dataset{Mushroom} & \textbf{0.00000} (0.00000) & \textbf{0.00000} (0.00000) & \textbf{0.00000} (0.00000) \\
\dataset{Pendigits} & \textbf{0.00854} (0.00183) & 0.04752 (0.01515) & \underline{0.00856} (0.00168) \\
\dataset{Phishing} & \textbf{0.02916} (0.00356) & 0.03865 (0.00649) & \underline{0.02935} (0.00355) \\
\dataset{Protein} & \textbf{0.32959} (0.00500) & 0.49377 (0.03958) & \underline{0.33402} (0.00578) \\
\dataset{SVMGuide1} & \underline{0.03129} (0.00628) & 0.03786 (0.00764) & \textbf{0.03120} (0.00637) \\
\dataset{SatImage} & \textbf{0.08386} (0.00716) & 0.13876 (0.02631) & \underline{0.08437} (0.00711) \\
\dataset{Sensorless} & \underline{0.00131} (0.00034) & 0.01304 (0.00298) & \textbf{0.00118} (0.00029) \\
\dataset{Shuttle} & \underline{0.00015} (0.00011) & 0.00022 (0.00015) & \textbf{0.00013} (0.00011) \\
\dataset{Splice} & \textbf{0.02957} (0.00798) & 0.11257 (0.02121) & \underline{0.03005} (0.00769) \\
\dataset{USPS} & \textbf{0.03820} (0.00395) & 0.12554 (0.03381) & \underline{0.03954} (0.00417) \\
\dataset{w1a} & \underline{0.01108} (0.00081) & 0.01586 (0.00279) & \textbf{0.01106} (0.00081) \\
\bottomrule
\end{tabular}

    \caption{Test risks computed when using different bounds for optimizing $\rho$. Best risk achieved overall is marked in \textbf{bold}, while best risk achieved by optimization is marked with \underline{underline}.  
    }
    \label{tab:opt_mvrisk}
\end{table}

\begin{table}[ht]
    \centering
    \begin{tabular}{@{}lccc@{}}\toprule
Dataset & $\hat L(\MV_u,\testset)$ & $\hat L(\MV_{\rho_{\FO}^*}, \testset)$ & $\hat L(\MV_{\rho_{\TND}^*}, \testset)$\\
\midrule
\dataset{Adult} & \textbf{0.16370} (0.00322) & 0.19592 (0.01385) & \underline{0.16427} (0.00337) \\
\dataset{Cod-RNA} & \underline{0.04346} (0.00158) & 0.07990 (0.00725) & \textbf{0.04282} (0.00167) \\
\dataset{Connect-4} & \underline{0.17716} (0.00238) & 0.29161 (0.01928) & \textbf{0.17698} (0.00211) \\
\dataset{Fashion-MNIST} & \textbf{0.12258} (0.00244) & 0.23242 (0.01962) & \underline{0.12367} (0.00262) \\
\dataset{Letter} & \textbf{0.04326} (0.00374) & 0.19865 (0.02292) & \underline{0.04613} (0.00341) \\
\dataset{MNIST} & \textbf{0.03553} (0.00153) & 0.18514 (0.02914) & \underline{0.03662} (0.00164) \\
\dataset{Mushroom} & \textbf{0.00000} (0.00000) & \textbf{0.00000} (0.00000) & \textbf{0.00000} (0.00000) \\
\dataset{Pendigits} & \textbf{0.01052} (0.00164) & 0.06012 (0.01572) & \underline{0.01070} (0.00174) \\
\dataset{Phishing} & \underline{0.03009} (0.00364) & 0.05129 (0.00849) & \textbf{0.02958} (0.00370) \\
\dataset{Protein} & \textbf{0.33413} (0.00552) & 0.51822 (0.02526) & \underline{0.33895} (0.00504) \\
\dataset{SVMGuide1} & \textbf{0.03006} (0.00541) & 0.03845 (0.00701) & \underline{0.03126} (0.00532) \\
\dataset{SatImage} & \textbf{0.09063} (0.00713) & 0.15094 (0.02514) & \underline{0.09114} (0.00690) \\
\dataset{Sensorless} & \underline{0.00187} (0.00038) & 0.01819 (0.00269) & \textbf{0.00171} (0.00040) \\
\dataset{Shuttle} & \underline{0.00024} (0.00011) & 0.00035 (0.00020) & \textbf{0.00016} (0.00012) \\
\dataset{Splice} & \textbf{0.03398} (0.00759) & 0.12252 (0.02238) & \underline{0.03657} (0.00857) \\
\dataset{USPS} & \textbf{0.04385} (0.00457) & 0.14450 (0.02630) & \underline{0.04534} (0.00418) \\
\dataset{w1a} & \underline{0.01121} (0.00068) & 0.01572 (0.00234) & \textbf{0.01117} (0.00078) \\
\bottomrule
\end{tabular}

    \caption{Test risks computed when using different bounds for optimizing $\rho$ for random forest trained using reduced bagging. Best risk achieved overall is marked in \textbf{bold}, while best risk achieved by optimization is marked with \underline{underline}.
    }
    \label{tab:opt_mvrisk_reduced}
\end{table}
\end{document}